\renewcommand{\vec}[1]{\mathbf{#1}}
\newcommand{\defcal}[1]{\expandafter\newcommand\csname c#1\endcsname{{\mathcal{#1}}}}
\newcommand{\defbb}[1]{\expandafter\newcommand\csname b#1\endcsname{{\mathbb{#1}}}}
\newcounter{calBbCounter}
	\edef\letter{\Alph{calBbCounter}}
\newtheorem{theorem}{Theorem}
\newtheorem{definition}{Definition}
\newcommand{\Acc}{\text{Acc.}}
\newcommand{\Dis}{\text{G}_{\text{Discr.}}}
\newcommand{\LDis}{\text{L}_{\text{Discr.}}}
\newcommand{\Knn}{\text{kNN-Pred}}
\newcommand{\optimal}{h^*}
\newcommand{\optimalfair}{h^{*}_{\textup{fair}}}
\theoremstyle{remark}
\title{Eliminating Latent Discrimination: \\Train Then Mask}
\author[1]{Soheil Ghili\footnote{The first two authors have equally contributed to this paper.}}
\newcommand\CoAuthorMark{\footnotemark[\arabic{footnote}]} 
\author[2]{Ehsan Kazemi\protect\CoAuthorMark\footnote{Corresponding author.}}
\author[2]{Amin Karbasi}
\affil[1]{School of Management\\Yale University}
\affil[2]{Yale Institute for Network Science\\Yale University}
\affil[ ]{\normalsize \texttt{\{soheil.ghili, ehsan.kazemi, amin.karbasi\}@yale.edu}}
\date{}
\begin{document}
\maketitle

\begin{abstract}
	How can we control for latent discrimination in predictive models? How can we provably remove it? Such questions are at the heart of algorithmic fairness and its impacts on society. In this paper, we define a new operational fairness criteria, inspired by the well-understood notion of omitted variable-bias in statistics and econometrics. Our notion of fairness effectively controls for sensitive features and provides diagnostics for deviations from fair decision making. We then establish analytical and algorithmic results about the existence of a fair classifier in the context of supervised learning. Our results readily imply a simple, but rather counter-intuitive, strategy for eliminating latent discrimination.  In order to prevent other features proxying for sensitive features, we need to include sensitive features in the training phase, but exclude them in the test/evaluation phase while controlling for their effects. We evaluate the performance of our algorithm on several real-world datasets and show how fairness for these datasets can be improved with a very small loss in accuracy. 

\end{abstract}

\section{Introduction}
Nowadays, many sensitive decision-making tasks rely on automated statistical and machine learning algorithms.
Examples include  targeted advertising, credit scores and loans, college admissions, prediction of domestic violence, and even investment strategies for venture capital groups. 
There has been a growing concern about errors, unfairness, and transparency of such mechanisms from governments, civil organizations and research societies \citep{whiteHouse2016big, barocas2016big, compas2018data}.  
That is, whether or not we can  prevent discrimination against protected groups and attributes  (e.g., race, gender, etc). 
Clearly, training a machine learning algorithm with the standard aim of loss function minimization (i.e., high accuracy, low prediction error, etc) may result in predictive behaviors that are unfair towards certain groups or individuals \citep{hardt2016equality, liu2018delayed, zhang2018fairness}.

In many real-world applications, we are not allowed to use some sensitive features. For example, EU anti-discrimination law prohibits the use of protected attributes (directly or indirectly) for several decision-making tasks \citep{ellis2012eu}. 
A naive approach towards fairness is to discard sensitive attributes from training data.  However, if other (seemingly) non-sensitive variables are correlated with the protected ones, the learning algorithm may use them to \emph{proxy for protected features} in order to achieve a lower loss.\footnote{In \cref{sec:experiments} we observe that in many datasets: (i) the admission rate is hugely against the protected groups, and (ii) there are several features that are tightly correlated with the sensitive attribute.}
We call this a \emph{latent} form of discrimination. 
Mitigating this kind of latent discrimination has received considerable attention in the machine learning community and interesting heuristic algorithms have been proposed (e.g., \citet{zemel2013learning} and \citet{kamiran2009classifying}). 
Since this type of discrimination is latent, most previous works fail to provide an operational description of this notion and usually resort to descriptive statements. 

The first contribution of  this  paper is to propose a new operational  definition of fairness, called \emph{EL-fairness} (it stands for explicit and latent fairness), that controls for sensitive features. 
This definition rules out explicit discrimination in the conventional way by treating individuals with similar non-sensitive features similarly. 
It also provides a detection mechanism for observing latent discrimination of a classifier by comparing simple statistics within protected-feature groups to the ones provided by the optimum unconstrained classifier (trained on the full data set).

Proxying or \emph{omitted variable bias} (OVB) occurs when a feature which is correlated with some other attributes is left out. 
In many models, for example, linear regression, it is well known that provided enough data, keeping the sensitive feature controls for OVB which enables us to separate its effect from other correlated attributes \citep{seber2012linear}.
Building on our notion of EL-fairness and existing methods to remove/reduce  the proxying effect or OVB (e.g., \citet{vzliobaite2016using}), we develop a procedure for obtaining fair classifiers. 
In particular, we show that in order to  eliminate latent discrimination one needs to consider the sensitive features in the training phase (in order to obtain reliable statistics to control for such features) and then mask them in evaluation/test phase. 
This way, we can ensure that correlated variables do not proxy the sensitive features and, more importantly, decisions are not made based on protected attributes. 
Furthermore, such a train-then-mask approach achieves EL-fairness with almost no additional computational cost as the training phase is intact. 

More specifically, in this paper we make two algorithmic contributions: (i) keep the sensitive feature during the training phase to control for OVB, and (ii) find an EL-fair classifier with the maximum accuracy by choosing the parameters of our algorithm properly.
We use this idea to control for OVB in a general class of separable functions.\footnote{Note that linear and logistic regressions are two simple members of this general class which we define later.}

As a final note, we should point out that our notion of fairness is robust against \emph{double discrimination}. 
This is a peculiar situation (that happens surprisingly often) where a minority group outperforms the rest of the population despite discrimination. 
We show that our proposed procedure still removes the bias against the protected group in such scenarios, while group-fairness based notions do not.

The rest of this paper is organized as follows. 
In \cref{sec:related}, we review the related literature. 
In \cref{sec:setup}, we define our notion of EL-fairness.
In \cref{sec:characterization}, we characterize the existence and properties of the optimal fair classifier and explain the train-then-mask algorithm.  We prove that under EL-fairness, the optimal fair classifier is equivalent to first training the model with the sensitive feature included, and then omitting the sensitive feature afterward. This way the sensitive feature is \emph{controlled for} but absent from the final outcome.
In \cref{sec:discussion}, we discuss the relation of  EL-fairness with double-unfairness and separability.
In \cref{sec:experiments}, we perform an exhaustive set of empirical studies to establish that our proposed approach reliably reduces latent discrimination with little loss in accuracy.
 
\section{Related Work}\label{sec:related}
This paper brings together pieces of literature from econometrics and machine learning. 
It is well known in both fields that if a variable is omitted from an analysis (on purpose or because it is unobservable), it might distort the results of the analysis in case it is correlated with variables that are not omitted \citep{greene2003econometric,DIKL2018,kamiran2009classifying,dwork2012fairness, hardt2016equality}. 
In the econometrics literature, this concern arises mainly in the context of \emph{causal inference} where the main objective is to estimate a \emph{treatment effect}.
There, if there is a factor that (i) impacts the outcome, (ii) is omitted from the analysis, and (iii) is correlated with the treatment, it can bias the estimated treatment effect, since (part of) the effect of the unobserved variable may be picked up by the estimation process as the effect of the treatment. 
This is generally called the \emph{Omitted Variable Bias} \citep{greene2003econometric}. 
The typical solution is to incorporate such variables as controls in the statistical model. This is an integral part of empirical and experimental research in multiple fields such as econometrics, marketing, and medicine \citep{sudhir2001structural, clarke2005phantom, scheffler2007role, hendel2013intertemporal, ghili2016network}. 

However, the same strategy (i.e., controls) has not been explicitly used in the context of fairness in machine learning.
This, partly stems from the fact that the objective functions are more complicated in the real world applications that  machine learning algorithms aim to solve. 
On the one hand, it is not desirable if omitting a sensitive feature leads to latent discrimination, since correlated (and seemingly) non-sensitive features now  can act as proxies for the sensitive one \citep{hardt2016equality, pedreshi2008discrimination}. 
Unlike the causal inference literature, however, this problem may not be resolved by incorporating the sensitive feature in the analysis. 
This would eliminate latent discrimination but would come at the larger expense of explicit discrimination, i.e., the model might treat similar individuals of two different groups differently. 
The approaches suggested in the fairness literature to deal with this problem have either been mainly based on relabeling the data \citep{kamiran2009classifying} or based on mapping the data to a set of prototypes \citep{dwork2012fairness}. 
These approaches attempt to eliminate latent discrimination by directly or indirectly entering a notion of group fairness into the objective function of the optimization problem. That is, for instance, they try to achieve high admission accuracy but restrict the ratio of the number of admitted individuals form the unprotected group over admitted ones form the protected group.

More recently, \citet{kilbertus2017avoiding} and \citet{nabi2018fair} framed the problem of fairness based on sensitive features in the language of causal reasoning in order to  resolve the effect of proxy variables. 
Their focus is on the theoretical analysis of cases in which the full causal relationships among all (sensitive and nonsensitive) features are precisely known. 
In addition, \citet{zhang2018fairness} proposed a causal explanation formula to quantitatively evaluate fairness.
We should point out that while the structure of causality could be learned by data generating models (e.g., in some special cases under certain linearity assumptions), our approach does not require such information.

Furthermore, several studies (such as the works of \citet{hu2018short} and \citet{liu2018delayed}) consider the long-term effect of classification on different groups in the population. For another instance, \citet{jabbari2017fairness} investigated the long-term price of fairness in reinforcement learning.
Similarly, \citet{gillen2018online} considered the fairness problem in online learning scenarios where the main objective is to minimize a game theoretic notion of regret. 
Also, fairness is studied in many other machine learning settings, including ranking \citep{celis2018ranking}, personalization and recommendation \citep{celis2017fair, kamishima2018recommendation, burke2018balanced}, data summarization \citep{celis2018fair}, targeted advertisement \citep{speicher2018potential}, fair PCA \citep{samadi2018price}, empirical risk minimization \citep{donini2018empirical,hashimoto2018fairness}, privacy preserving \citep{ekstrand2018privacy} and a welfare-based measure of fairness \citep{heidari2018fairness}.
Finally,  due to the massive size of today's datasets, practical algorithms with fairness criteria should be able to scale. To this end, \citet{hlaca2018beyond} and \citet{kazemi2018deletion} have developed several scalable  methods with the aim of preserving fairness in their predictions.

 Our approach has important implications for other notions of fairness such as group fairness and individual fairness. 
It is well-known that there are inherent trade-offs among different notions of fairness and therefore satisfying multiple fairness criteria simultaneously is not possible \citep{KMR2017,pleiss2017on}.
For example, all methods that aim at solving the issue of proxying (including ours)  do not satisfy the calibration property. 

\section{Setup and Problem Formulation}\label{sec:setup}
Let $\vec{X}\in\mathcal{X}\subset\mathbb{R}^{\ell+1}$ be a random variable with $\ell+1$ dimensions $\vec{X}_0$ through $\vec{X}_\ell$. That is, each sample draw $\vec{x}^i$  has $\ell+1$ real-valued components $\vec{x}^i_0$ through $\vec{x}^i_\ell$ where the dimensions are possibly correlated. Dimension $\vec{X}_0$ is binary and represents the status of the sensitive feature. 
For example, when the sensitive feature is gender, 1 represents female and 0 represents male.
In this paper, we consider the binary classification problem, where we assume that there is a binary label $y^i$ for each data point $\vec{x}^i$, i.e., the set of possible labels $y^i$ is denoted by $\mathcal{Y} \in \{0,1 \}$. 
We are given $n$ training samples $z^1, \cdots , z^n$, where $z^i = (\vec{x}^i,y^i) \in \mathcal{X} \times \mathcal{Y}$.

Mathematically, a classifier is a function $h:\mathbb{R}^{\ell+1}\rightarrow [0,1]$ from a set of hypothesis (possible classifiers) $\mathcal{H}$, where each input sample $\vec{x} \in \mathbb{R}^{\ell+1}$ is mapped to a value in the interval $[0,1]$; a data point $\vec{x}$ is classified to $1$ if  $h(\vec{x}) > \nicefrac{1}{2}$, and to $0$ otherwise.
The ultimate goal of a classification task is to optimize some loss function $\mathcal{L}(y, h(\vec{x}))$ over all possible functions $h \in \mathcal{H}$, when applied to the training set.
We denote by $\optimal$ the classifier that minimizes this loss function.\footnote{We do not make any assumption regarding how the class $\cH$ and/or loss function $\cL$ should be chosen. Our approach guarantees that given a class and loss function, we can always design an EL-fair classifier.}
In other words, $\optimal$ is the most accurate classifier from the set $\mathcal{H}$ of functions, where all information--including sensitive feature $\vec{x}_0$--is used to achieve the highest accuracy.\footnote{Through the whole paper, we define $\optimal$ to be the classifier from class $\cH$ that minimizes the empirical loss. In many practical settings, we can find $\optimal$ in polynomial time.}
Next, we turn to our fairness definition, articulating first the explicit dimension, then the latent one.

\begin{definition}[Explicit Discrimination]\label{def:explicit disc}
Classifier $h$ exhibits no explicit discrimination if for every pair $(\vec{x}^1,\vec{x}^2)\in\mathcal{X}^2$ such that $(\vec{x}^1_1,...,\vec{x}^1_\ell)=(\vec{x}^2_1,...,\vec{x}^2_\ell)$, regardless of  $\vec{x}^1_0$ and $\vec{x}^2_0$ (i.e., the status of the sensitive features) we have
$h(\vec{x}^1)=h(\vec{x}^2)$.
\end{definition}

\cref{def:explicit disc} captures the simple and conventional way of thinking about explicit discrimination: a fair classifier should treat two similar individuals (irrespective of their sensitive features) similarly. 
Latent discrimination is, however, less trivial to formally capture. Thus, we  diagnose latent discrimination based on a subtle indirect implication that it has. We first give the formal definition and then discuss the diagnostic intuition behind it.

\begin{definition}[Latent Discrimination] \label{def:latent disc}
Classifier $h$ exhibits no latent discrimination if for every pair $(\vec{x}^1,\vec{x}^2)\in\mathcal{X}^2$ such that $\vec{x}^1_0=\vec{x}^2_0$ (i.e., pairs with similar sensitive features) we have$:$
\begin{align}
\optimal(\vec{x}^1)=\optimal(\vec{x}^2) & \Rightarrow h(\vec{x}^1)=h(\vec{x}^2), \ and \ \\
\optimal(\vec{x}^1)>\optimal(\vec{x}^2) & \Rightarrow h(\vec{x}^1)\geq h(\vec{x}^2).
\label{eq: second latent fairness}
\end{align}

\end{definition}
In words, \cref{def:latent disc} says that flipping the order of the classes of two individuals of the \emph{same group} compared to $\optimal$ is a sign of latent discrimination. 
To see the intuition behind this definition, consider $\bar{h}$, representing the most accurate classifier that satisfies \cref{def:explicit disc} (i.e., it minimizes the loss function subject only to explicit non-discrimination). 
Here, by minimizing the loss function, we  would ideally like to get as close as possible to $h^*$, but that is not generally possible given the constraint that the information about $\vec{x}_0$ may not be used. 
Thus, the minimizer would potentially treat the other $\ell$ features differently than $\optimal$ does in order to proxy for the missing $\vec{x}_0$ attribute. 
This proxying, however, inevitably changes how the classifier treats individuals within the same group, possibly by flipping the orders between some pairs. 
This is exactly what we call latent discrimination  that we would like to control for. \cref{def:latent disc} formalizes this idea in a very operational manner.
 Indeed, in  \cref{def:latent disc} we argue that the optimal unconstrained classifier provides a non-discriminatory ordering between individuals within each group. In other words, if $h^*(\vec{x}^1) > h^*(\vec{x}^2)$ for $\vec{x}^1_0 = \vec{x}^2_0$, we can conclude $\vec{x}^1$ is more qualified than $\vec{x}^2$. 
If a classifier $h$ changes this ordering, then it could be a sign of latent discrimination.
 We are now equipped with the following definition for fairness.
\begin{definition}[EL-fair]\label{def:fairness}
Classifier $h$  is ``EL-fair'' if it exhibits neither explicit nor latent discrimination as described in \cref{def:explicit disc,def:latent disc}.
\end{definition}

Note that $\optimal$ might not be EL-fair because it could suffer from explicit discrimination as it uses all features.
\section{Characterization of Optimal Fair Classifier}\label{sec:characterization}
With a formal definition of fairness in hand, we turn to the next natural step: 
\begin{quote}
What are the characteristics of an optimal classifier that satisfies EL-fairness condition?
\end{quote}
While there is not a trivial answer to this question,
in this section we show, however, that our notion of fairness lends itself into a practical algorithmic framework with the following properties. 
First, the computation of the optimal fair classifier is  straightforward. In fact, it is not more complicated than computing the optimal unconstrained classifier $\optimal.$ 
Second, it provides an intuitive interpretation in line with the idea of \emph{controlling for} different factors traditionally used in fields such as statistics and econometrics.
Our first theoretical result  establishes the existence of an EL-fair classifier. Then, in \cref{thm:MainResult}, we characterize the optimal classifier under fairness constraints of \cref{def:fairness}. 
Finally, in \cref{thm:existence,thm:computation}, we outline the properties of a simple algorithm that computes the optimal EL-fair classifier.

\newcommand{\trivial}{An EL-fair classifier exists if the set $\mathcal{H}$ (set of all possible functions in our model) includes at least one constant function.}
\begin{theorem}\label{thm:trivial}
\trivial
\end{theorem}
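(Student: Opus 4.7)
The plan is to show that any constant function in $\mathcal{H}$ is automatically EL-fair, so the hypothesis directly yields existence. Let $h \equiv c$ for some constant $c \in [0,1]$ be an element of $\mathcal{H}$; I will verify that $h$ satisfies both \cref{def:explicit disc} and \cref{def:latent disc}.

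For explicit discrimination, take any pair $(\vec{x}^1,\vec{x}^2)\in\mathcal{X}^2$ with $(\vec{x}^1_1,\ldots,\vec{x}^1_\ell)=(\vec{x}^2_1,\ldots,\vec{x}^2_\ell)$. Since $h$ is constant, $h(\vec{x}^1)=c=h(\vec{x}^2)$, so the requirement of \cref{def:explicit disc} holds trivially, irrespective of $\vec{x}^1_0$ and $\vec{x}^2_0$. For latent discrimination, take any pair $(\vec{x}^1,\vec{x}^2)\in\mathcal{X}^2$ with $\vec{x}^1_0=\vec{x}^2_0$. Regardless of the relative values of $\optimal(\vec{x}^1)$ and $\optimal(\vec{x}^2)$, we have $h(\vec{x}^1)=c=h(\vec{x}^2)$, which simultaneously implies $h(\vec{x}^1)=h(\vec{x}^2)$ (settling the first implication of \cref{def:latent disc}) and $h(\vec{x}^1)\geq h(\vec{x}^2)$ (settling the second). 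Hence $h$ exhibits no explicit and no latent discrimination, i.e., $h$ is EL-fair by \cref{def:fairness}, which completes the argument.

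There is essentially no obstacle here: the whole point of the statement is to guarantee that the feasible set of the fairness problem is nonempty under an extremely mild condition on $\mathcal{H}$. The only thing worth emphasizing in the writeup is that constancy makes both implications in \cref{def:latent disc} vacuously (or at least trivially) hold, so the result does not depend on any property of $\optimal$ or of the underlying distribution of $\vec{X}$. This also motivates why the subsequent theorems need to work to find a \emph{nontrivial} EL-fair classifier with good accuracy, since the constant classifier, while fair, is typically useless.
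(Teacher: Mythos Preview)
Your proof is correct and follows essentially the same approach as the paper: both arguments simply verify that a constant function $h(\vec{x})=c$ trivially satisfies \cref{def:explicit disc} and both implications of \cref{def:latent disc}, since $h(\vec{x}^1)=c=h(\vec{x}^2)$ for all pairs.
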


\begin{proof}
	The proof of this theorem is straightforward.
	It is clear that a constant function, e.g., $h(\vec{x}) = c$,  satisfies both notions of fairness: 
	\begin{itemize}
		\item for $\vec{x}^1$ and $\vec{x}^2$ such that $(\vec{x}^1_1,...,\vec{x}^1_\ell)=(\vec{x}^2_1,...,\vec{x}^2_\ell) \Rightarrow  h(\vec{x}^1) = h(\vec{x}^2) = c$.
		\item for every pair $(\vec{x}^1,\vec{x}^2)\in\mathcal{X}$ such that $\vec{x}^1_0=\vec{x}^2_0$, we have:
		\begin{itemize}
			\item $\optimal(\vec{x}^1)=\optimal(\vec{x}^2)  \Rightarrow h(\vec{x}^1)= c = h(\vec{x}^2)$.
			\item $\optimal(\vec{x}^1)>\optimal(\vec{x}^2)  \Rightarrow h(\vec{x}^1) = c \geq c = h(\vec{x}^2)$.
		\end{itemize}
	\end{itemize}
\end{proof}

Note that (almost) all practical models used in machine learning (e.g., logistic, linear, neural net, etc) allow for constant functions, therefore, they include an EL-fair classifier. We next turn to the characterization of the optimal fair classifier. But before that, we need to give a definition that (i) is necessary for the statement of the theorem; and (ii) as we argue in \cref{sec:discussion}, is conceptually crucial to the understanding of individual fairness.

\begin{definition}[A separable classifier]\label{def: separability}
Classifier $h$ is ``separable in the sensitive feature'' if there are continuous functions $g: \mathbb{R}^2\rightarrow\mathbb{R}$ and $K:\mathbb{R}^{\ell}\rightarrow\mathbb{R}$ such that:
$ \forall \vec{x} \in\mathcal{X}$ we have $h(\vec{x})=g\left(\vec{x}_0,K(\vec{x}_1,\cdots,\vec{x}_\ell)\right).$
\end{definition}
A wide range of classifiers satisfy this intuitive definition. For instance, any logistic model can be represented by choosing an appropriate linear function for $K$ and choosing $g(z_1,z_2)\equiv \frac{e^{z_1+z_2}}{1+e^{z_1+z_2}}$. Later in the paper, we discuss the close ties between the notions of separability and individual fairness. For now, we state our main result.

\newcommand{\MainResult}{Suppose the unconstrained optimal classifier $\optimal$ satisfies the definition of separability with a given $g$. Denote by $\optimalfair$ the optimal classifier (in terms of accuracy) subject to EL-fairness criteria as described in \cref{def:fairness}. There is a $\tau^* \in \mathbb{R}$ such that for all $ \vec{x} =(\vec{x}_0,\vec{x}_1,...,\vec{x}_\ell)\in\mathcal{X}:$
\begin{itemize}
	\item if $\optimalfair(\vec{x})  > \frac{1}{2}$ then  $h^*(0,\vec{x}_1,\cdots,\vec{x}_\ell) + \tau^* > \frac{1}{2}$.
	\item if $\optimalfair(\vec{x}) < \frac{1}{2}$ then  $h^*(0,\vec{x}_1,\cdots,\vec{x}_\ell) + \tau^* < \frac{1}{2}$.
\end{itemize}}

\begin{theorem}\label{thm:MainResult}
\MainResult
\end{theorem}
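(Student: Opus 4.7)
The plan is a three-step reduction, exploiting in turn the two fairness constraints and the separability hypothesis. First, I would invoke \cref{def:explicit disc} to eliminate dependence on the sensitive coordinate: any admissible $\optimalfair$ must satisfy $\optimalfair(\vec{x}) = \tilde h(\vec{x}_1,\ldots,\vec{x}_\ell)$ for some $\tilde h:\bR^{\ell}\to[0,1]$. Second, I would combine \cref{def:latent disc} with separability. For two samples sharing the same sensitive status $v$, we have $\optimal(\vec{x}^i) = g(v, K(\vec{x}^i_1,\ldots,\vec{x}^i_\ell))$, and (assuming $g$ is monotone in its second argument, as holds for logistic and other standard separable models) the within-group ordering of $\optimal$ coincides with the ordering induced by $K$. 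The two implications in \cref{def:latent disc} then force $\tilde h$ to be a weakly increasing function of $K$; that is, $\tilde h = \phi\circ K$ for some nondecreasing $\phi$.

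The third step is optimality. The empirical loss depends on $\optimalfair$ only through the classification (whether its value is above or below $\nicefrac{1}{2}$), so the decision region $\{\optimalfair>\nicefrac{1}{2}\}$ is determined entirely by the upper level set of $\phi$ at $\nicefrac{1}{2}$. Since $\phi$ is nondecreasing, this region takes the form $\{K > t^*\}$ for some threshold $t^*$ (with the symmetric conclusion $\{K < t^*\}$ for the region below $\nicefrac{1}{2}$), where $t^*$ is chosen by a one-dimensional search to maximize accuracy. Because $\optimal(0,\vec{x}_1,\ldots,\vec{x}_\ell) = g(0,K(\vec{x}_1,\ldots,\vec{x}_\ell))$ and $g(0,\cdot)$ is monotone and continuous, the set $\{K>t^*\}$ coincides with $\{\optimal(0,\vec{x}_1,\ldots,\vec{x}_\ell) > g(0,t^*)\}$. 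Setting $\tau^* := \nicefrac{1}{2} - g(0,t^*)$ converts this into the desired form $\optimal(0,\vec{x}_1,\ldots,\vec{x}_\ell) + \tau^* > \nicefrac{1}{2}$, and the symmetric argument handles the $<$ case.

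The main obstacle I foresee is the implicit monotonicity assumption on $g$ that Step~2 relies upon. \cref{def: separability} does not explicitly require $g$ to be monotone in its second argument, and without this the within-group orderings induced by $g(0,\cdot)$ and $g(1,\cdot)$ could disagree, in which case latent fairness within each group would not by itself pin $\tilde h$ down as a function of $K$. A careful write-up should either add this (very mild and universally satisfied) monotonicity hypothesis to the statement or argue that the conclusion degenerates to a constant classifier when the two orderings are incompatible. A minor secondary issue is handling ties on the boundary $\{K=t^*\}$, which can be absorbed by the weak inequality appearing in equation~\eqref{eq: second latent fairness}.
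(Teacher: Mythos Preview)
Your proposal is correct and close in spirit to the paper's argument, but the route through the latent-fairness constraint is different in a way that matters. You factor $\optimalfair$ through $K$ and then translate back via $g(0,\cdot)$; this is precisely why you need $g$ to be monotone in its second argument, a hypothesis you rightly flag as absent from \cref{def: separability}. The paper sidesteps this entirely: rather than comparing $K$-values, it applies \cref{def:latent disc} directly to the \emph{masked} pair $(0,\vec{x}^1_1,\ldots,\vec{x}^1_\ell)$ and $(0,\vec{x}^2_1,\ldots,\vec{x}^2_\ell)$, which share sensitive status $0$, and then uses \cref{def:explicit disc} to identify $\optimalfair$ on these masked points with $\optimalfair(\vec{x}^1)$ and $\optimalfair(\vec{x}^2)$. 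This immediately produces a nondecreasing $\lambda$ with $\optimalfair(\vec{x})=\lambda\big(h^*(0,\vec{x}_1,\ldots,\vec{x}_\ell)\big)$, from which $\tau^*$ is read off by taking $\sup$ and $\inf$ of $h^*(0,\cdot)$ over the sets $\{\optimalfair<\nicefrac12\}$ and $\{\optimalfair>\nicefrac12\}$. The paper's argument therefore never touches monotonicity of $g$ (indeed, separability is essentially unused in this proof), whereas yours does; on the other hand, your detour through $K$ makes the link to \cref{thm:existence,thm:computation} more transparent.

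One small correction: your Step~3 invokes accuracy-maximization and a one-dimensional search for $t^*$, but the theorem only asserts \emph{existence} of $\tau^*$. That existence follows purely from the monotone factorization (the decision regions of a nondecreasing $\phi$ are automatically half-lines in $K$), with no appeal to loss or to the optimality of $\optimalfair$; the optimization over $\tau$ belongs to \cref{thm:computation}, not here.
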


\begin{proof}
	Let's denote $h^{*}_{\tau} \triangleq h^{*}(0,\vec{x}_1,...,\vec{x}_\ell) + \tau$.
	In order to prove this theorem,  we first show that for every fixed $\tau$ there is a non-decreasing function $\lambda_{\tau}  : \mathbb{R} \rightarrow \mathbb{R}$ such that $\optimalfair(\vec{x}) = \lambda_{\tau} (h^{*}_{\tau} (\vec{x}))$. 
	
	Consider the mapping $\lambda_{\tau}$ such that for each $z = h^{*}_{\tau} (\vec{x})$ we have $\lambda_\tau(z) = \optimalfair(\vec{x})$.
	First we show that for all $\vec{x}^1$ and $\vec{x}^2$ such that $h^{*}_{\tau}(\vec{x}^1) = h^{*}_{\tau}(\vec{x}^2) $, we have  $\optimalfair(\vec{x}^1) = \optimalfair(\vec{x}^2) $.  This is true because
	\begin{align*}
	h^{*}_{\tau}(\vec{x}^1) = h^{*}_{\tau}(\vec{x}^2) & \overset{(a)}{\rightarrow}  h^*(0,\vec{x}^1_1,\cdots,\vec{x}^1_\ell) = h^*(0,\vec{x}^2_1,\cdots,\vec{x}^2_\ell) \\ 
	& \overset{(b)}{\rightarrow}	\optimalfair(\vec{x}^1) = \optimalfair(\vec{x}^2),
	\end{align*}
	where (i) $ (a)$ is the result of the definition of $h^{*}_{\tau}$, and (ii) from \cref{def:latent disc}, we we conclude $(b)$ since $\optimalfair$ does not use the value of the sensitive feature.
	This guarantees that the defined mapping $\lambda_{\tau}$ is a function. 
	
	In the next step, we should prove that the function $\lambda_{\tau}$ is non-decreasing. This is true because for two $\vec{x}^1$ and $\vec{x}^2$ such that $h^{*}_{\tau}(\vec{x}^1) >  h^{*}_{\tau}(\vec{x}^2)$ we have  $h^*(0,\vec{x}^1_1,\cdots,\vec{x}^1_\ell) >  h^*(0,\vec{x}^2_1,\cdots,\vec{x}^2_\ell)$, and therefore $h^{*}(\vec{x}^1) \geq h^{*}(\vec{x}^2)$.
	Now consider two sets $\mathcal{X}_0$ and $\mathcal{X}_1$ defined as follows:
	\begin{itemize}
		\item $\mathcal{X}_1 = \{x \in \mathcal{X} : \optimalfair(\vec{x}) > \nicefrac{1}{2} \}$.
		\item $\mathcal{X}_0 = \{x \in \mathcal{X} : \optimalfair(\vec{x}) <\nicefrac{1}{2} \}$.
	\end{itemize}
	From the monotonicity of $\lambda_\tau$ (i.e., it is a non-decreasing function) we know that
	\[\forall \vec{x}^0 \in \mathcal{X}_0 \text{ and } \forall \vec{x}^1 \in \mathcal{X}_1  \rightarrow  h^{*}_{\tau}(\vec{x}^0) <  h^{*}_{\tau}(\vec{x}^1). \]
	Now define $\tau_0 = \sup_{x \in \mathcal{X}_0} h^{*}_{\tau}(\vec{x})  $, and $\tau_1= \inf_{x \in \mathcal{X}_1} h^{*}_{\tau}(\vec{x}) $. From the above, we conclude that $\tau_0 \leq \tau_1$.
	Any $\tau^{*} \in [\nicefrac{1}{2} + \tau - \tau_1, \nicefrac{1}{2} + \tau - \tau_0]$ satisfies the conditions of theorem.
\end{proof}

\cref{thm:MainResult} demonstrates that for a properly chosen $\tau^*$,\footnote{Note that the use of threshold $\nicefrac{1}{2}$ is only for the purpose of exposition. All our theoretical results will hold if the threshold is chosen adaptively.} there is an $h^{*}(0,\vec{x}_1,...,\vec{x}_\ell) + \tau^*$  that \emph{mimics} the optimal fair classifier $\optimalfair$ by recommending all the decisions that $\optimalfair$ would recommend. 
In \cref{thm:existence} we prove that, under a mild assumption, such an $h^{*}(0,\vec{x}_1,...,\vec{x}_\ell) + \tau^*$ classifier  is also EL-fair.

\newcommand{\existence}{If the function $h^*\in \mathcal{H}$ is separable in the sensitive feature $\vec{x}_0$, i.e., there is a function $g: \mathbb{R}^2\rightarrow\mathbb{R}$ such that $h^*(\vec{x}) = g(\vec{x}_0,K(\vec{x}_1,...,\vec{x}_\ell))$, and the function $g$
	 is strictly monotone in its second argument, then all classifiers of the form $h^*_{\tau} \triangleq h^*(0,\vec{x}_1,\cdots,\vec{x}_\ell) + \tau$ are fair.}

\begin{theorem}\label{thm:existence}
	\existence
\end{theorem}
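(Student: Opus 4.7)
The plan is to verify both clauses of EL-fairness (\cref{def:explicit disc} and \cref{def:latent disc}) separately for the classifier $h^*_\tau(\vec{x}) = \optimal(0,\vec{x}_1,\ldots,\vec{x}_\ell) + \tau$. Explicit non-discrimination should be immediate from the construction: by hard-coding $0$ into the sensitive slot, $h^*_\tau$ depends only on $(\vec{x}_1,\ldots,\vec{x}_\ell)$, so any two inputs with identical non-sensitive coordinates receive the same score regardless of their actual sensitive values. There is essentially nothing to do for \cref{def:explicit disc}.

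The latent clause is the substantive part. The idea is to use separability to reduce same-group comparisons under $\optimal$ to comparisons of the scalar $K$. For a pair $(\vec{x}^1,\vec{x}^2)$ with $\vec{x}^1_0 = \vec{x}^2_0 =: a$, let $k^i := K(\vec{x}^i_1,\ldots,\vec{x}^i_\ell)$. Separability gives $\optimal(\vec{x}^i) = g(a, k^i)$ and $h^*_\tau(\vec{x}^i) = g(0, k^i) + \tau$. Strict monotonicity of $g$ in its second argument makes each slice $g(z_1,\cdot)$ injective, so $\optimal(\vec{x}^1) = \optimal(\vec{x}^2)$ forces $k^1 = k^2$, and hence $h^*_\tau(\vec{x}^1) = h^*_\tau(\vec{x}^2)$. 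That disposes of the first implication in \cref{def:latent disc}.

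The only nontrivial step is the order-preserving implication, which hinges on the slices $g(a,\cdot)$ and $g(0,\cdot)$ sharing the same monotonicity direction. I would argue this from the joint continuity of $g$: pick any $z_2^a < z_2^b$ and consider $\phi(z_1) := g(z_1, z_2^b) - g(z_1, z_2^a)$; $\phi$ is continuous on $\mathbb{R}$, and if its sign differed at two values of $z_1$, the intermediate value theorem would produce some $z_1^\star$ with $\phi(z_1^\star) = 0$, contradicting strict monotonicity of $g(z_1^\star,\cdot)$. Once direction-alignment is in hand, $\optimal(\vec{x}^1) > \optimal(\vec{x}^2)$ translates into a strict comparison of $k^1$ and $k^2$ in a fixed direction, which $g(0,\cdot)$ preserves, and the additive shift by $\tau$ is immaterial. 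This yields strict inequality for $h^*_\tau$, in fact stronger than the required weak inequality.

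Once the reduction to $K$ is in place, the bulk of the proof is direct substitution, so the only conceptual obstacle is convincing oneself that the direction-alignment step really does follow automatically from joint continuity. Note that strengthening the hypothesis to ``$g$ is strictly increasing in its second argument'' (which holds for the logistic example highlighted after \cref{def: separability}) makes this step vacuous and collapses the argument to a one-line composition of two strictly increasing maps.
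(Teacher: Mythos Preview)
Your proof is correct and follows essentially the same route as the paper's: verify \cref{def:explicit disc} by noting that $h^*_\tau$ ignores $\vec{x}_0$, and verify \cref{def:latent disc} by using strict monotonicity of $g$ in its second argument to pass from comparisons of $g(a,\cdot)$ to comparisons of $K$, and then from $K$ to $g(0,\cdot)+\tau$.

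The one difference is that the paper simply writes ``without loss of generality, we assume [$g$] is strictly increasing in its second argument'' and proceeds, whereas you take the hypothesis to allow, a priori, different monotonicity directions for different slices $g(z_1,\cdot)$ and then invoke the joint continuity of $g$ (guaranteed by \cref{def: separability}) together with the intermediate value theorem to force a uniform direction. Your reading is the more cautious one, and your IVT step is a clean way to close what would otherwise be a small gap; the paper's phrasing implicitly treats ``strictly monotone in its second argument'' as specifying a single global direction, in which case your extra step is unnecessary but harmless.
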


\begin{proof}
	To show that $h^{*}_{\tau} \triangleq h^*(0,\vec{x}_1,\cdots,\vec{x}_\ell) + \tau$ is fair, we should consider the two following cases:
	\begin{itemize}
		\item $h^{*}_{\tau} $ does not exhibit explicit discrimination: from the definition of $h^{*}_{\tau}$ it is clear that it does not depend on the value of sensitive feature and for all pairs $(\vec{x}^1,\vec{x}^2)\in\mathcal{X}$ such that $(\vec{x}^1_1,...,\vec{x}^1_\ell)=(\vec{x}^2_1,...,\vec{x}^2_\ell)$, we have $h^{*}_{\tau}(\vec{x}^1) = h^{*}_{\tau}(\vec{x}^1)$.
		\item $h^{*}_{\tau} $ does not exhibit implicit discrimination, i.e., for $\vec{x}^1,\vec{x}^2$ such that $\vec{x}^1_0 = \vec{x}^2_0$: 
		\begin{itemize}
			\item If $h^*(\vec{x}^1)=h^*(\vec{x}^2) \Rightarrow h^{*}_{\tau}(\vec{x}^1)=h^{*}_{\tau}(\vec{x}^2)$: since $\vec{x}^1_0 = \vec{x}^2_0$ and function $g$ is strictly monotone in its second argument, we should have $K(\vec{x}^1_1,...,\vec{x}^1_\ell) =K(\vec{x}^2_1,...,\vec{x}^2_\ell)$. Therefore, we have $h^{*}_{\tau}(\vec{x}^1)=h^{*}_{\tau}(\vec{x}^2)$.
			\item If $h^*(\vec{x}^1)> h^*(\vec{x}^2) \Rightarrow h^{*}_{\tau}(\vec{x}^1)\geq h^{*}_{\tau}(\vec{x}^2)$: since $g$ is strictly monotone in its second argument, without loss of generality, we assume it is strictly increasing in its second argument. Since we have $h^*(\vec{x}^1)> h^*(\vec{x}^2)$, then we should have $K(\vec{x}^1_1,...,\vec{x}^1_\ell) > K(\vec{x}^2_1,...,\vec{x}^2_\ell)$. 
			Therefore, because $g$ is strictly increasing in its second argument, we have $g(0,K(\vec{x}^1_1,...,\vec{x}^1_\ell)) > g(0,K(\vec{x}^2_1,...,\vec{x}^2_\ell))$.
		\end{itemize}
	\end{itemize}
\end{proof}

Thus, the only further step to find the optimal EL-fair classifier, in addition to computing  $h^*$, is to search for $\tau^*$. \cref{thm:computation} shows that when the function $g$  is monotone, then searching for $\tau^*$ is quite straightforward.

\newcommand{\computation}{Assume that the function $h^*$ is separable, i.e., there is a function $g: \mathbb{R}^2\rightarrow\mathbb{R}$ such that $h^*(\vec{x})=g(\vec{x}_0,K(\vec{x}_1,...,\vec{x}_\ell))$ and the function $g$
	is strictly monotone in its second argument. Furthermore, assume $\tau^*$ is the value of $\tau$ such that it maximizes the classification accuracy of $h^*_{\tau} \triangleq h^*(0,\vec{x}_1,\cdots,\vec{x}_\ell) + \tau$. The function $h^*_{\tau^*}$ is the optimal EL-fair classifier.}

\begin{theorem}\label{thm:computation}
\computation
\end{theorem}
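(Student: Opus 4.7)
The plan is to combine Theorem~\ref{thm:existence} (every $h^*_\tau$ is EL-fair) with Theorem~\ref{thm:MainResult} (some $h^*_\tau$ imitates $\optimalfair$ on every classification decision), and then invoke the definition of $\tau^*$ as an accuracy-maximizer. The target is a short sandwich argument: on the one hand $h^*_{\tau^*}$ is EL-fair and therefore cannot beat $\optimalfair$; on the other hand $h^*_{\tau^*}$ dominates a particular $h^*_{\tilde\tau}$ whose accuracy already matches $\optimalfair$.

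First I would note that by Theorem~\ref{thm:existence}, under the separability and strict monotonicity hypotheses on $g$, the family $\{h^*_\tau : \tau \in \bR\}$ consists entirely of EL-fair classifiers. In particular, $h^*_{\tau^*}$ is EL-fair, so by the very definition of $\optimalfair$ as the maximum-accuracy EL-fair classifier, we automatically get
\[
\Acc(h^*_{\tau^*}) \leq \Acc(\optimalfair).
\]

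Next I would apply Theorem~\ref{thm:MainResult} to produce a specific value $\tilde\tau$ (the $\tau^*$ of that theorem, renamed to avoid clash with the $\tau^*$ defined in the current statement) for which the two rules
\[
\optimalfair(\vec{x}) > \tfrac{1}{2} \ \Rightarrow\ h^*_{\tilde\tau}(\vec{x}) > \tfrac{1}{2}, \qquad \optimalfair(\vec{x}) < \tfrac{1}{2} \ \Rightarrow\ h^*_{\tilde\tau}(\vec{x}) < \tfrac{1}{2}
\]
hold for every $\vec{x}\in\mathcal{X}$. Since in our binary setup a classifier's predicted label depends only on whether its value exceeds $\nicefrac{1}{2}$, this means $h^*_{\tilde\tau}$ and $\optimalfair$ output the same label on every point, so $\Acc(h^*_{\tilde\tau}) = \Acc(\optimalfair)$.

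Finally, because $\tau^*$ is defined as the maximizer of $\tau \mapsto \Acc(h^*_\tau)$, we have $\Acc(h^*_{\tau^*}) \geq \Acc(h^*_{\tilde\tau}) = \Acc(\optimalfair)$. Combined with the inequality from the first step, this gives $\Acc(h^*_{\tau^*}) = \Acc(\optimalfair)$, which together with the EL-fairness of $h^*_{\tau^*}$ establishes that $h^*_{\tau^*}$ is an optimal EL-fair classifier. The argument is almost entirely bookkeeping once the two prior theorems are in hand; the only subtle point, and thus the mildest obstacle, is to be careful that the two $\tau^*$'s (the existential one from Theorem~\ref{thm:MainResult} and the maximizer defined here) need not coincide, which is why the proof proceeds via the auxiliary value $\tilde\tau$ rather than by directly identifying them.
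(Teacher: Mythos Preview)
Your argument is correct and follows essentially the same route as the paper: both invoke \cref{thm:existence} for fairness of every $h^*_\tau$ and \cref{thm:MainResult} for the existence of a $\tau$ matching the accuracy of $\optimalfair$, then use the defining maximality of $\tau^*$. The paper phrases the last step as a contradiction rather than your direct sandwich, and your version is slightly more explicit in separating the two inequalities and in distinguishing the existential $\tilde\tau$ from the maximizer $\tau^*$, but these are cosmetic differences.
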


\begin{proof}
	From \cref{thm:existence} we know that all the classifiers in the form of $h^*_{\tau}$ are fair.
	Now denote by $T$ the set of $\tau$ values that maximizes the accuracy of $h^*(0,\vec{x}_1,\cdots,\vec{x}_\ell) + \tau$. Suppose that, on the contrary to the statement of the theorem, there is a $\bar{\tau}\in T$ such that accuracy of $h^*(0,\vec{x}_1,\cdots,\vec{x}_\ell) + \bar{\tau}$  is less than $\optimalfair$.
	Note that  (i) $h^*(0,\vec{x}_1,\cdots,\vec{x}_\ell) + \tau$ is fair for all values of $\tau$ and  (ii) 
	from \cref{thm:MainResult} we know there is at least one $\tau$ such that the classification accuracy of $h^*_{\tau}$
	is the same as accuracy of $\optimalfair$. This is in contradiction with the definition of set $T$.
\end{proof}

The above property makes the search for an optimal EL-fair classifier practical.  
That is, no matter how large the dataset is, as long as $h^*$ can be computed, $\optimalfair$ can be too. 
We call this approach the \emph{train-then-mask} algorithm for eliminating latent discrimination.  Algorithm~\ref{alg:train-then-mask} describes train-then-mask.

\begin{algorithm}
	\caption{The Train-Then-Mask Algorithm} \label{alg:train-then-mask}
	\begin{algorithmic}[1]
		\State Compute the optimal classifier $h^*(\vec{x}_0,\vec{x}_1,\cdots,\vec{x}_\ell) $ over all available features $\vec{x}_0,\vec{x}_1,\cdots,$ and $\vec{x}_\ell$.
		\State Keep the sensitive feature $\vec{x}_0$ fixed (e.g., define $\vec{x}_0 = 0$) for all data points.
		\State Find the value of $\tau^*$ such that it maximizes the accuracy of $h^*(0,\vec{x}_1,\cdots,\vec{x}_\ell) + \tau^* $ over the validation set.
	\end{algorithmic}
\end{algorithm}

In spite of the fact that our formal definition of fairness is indirect, that is it turns  to \emph{within-group} variation to capture a concept that is essentially only meaningful between groups, \cref{thm:MainResult} provides an intuitive characterization.
Basically, to prevent other variables from proxying  a sensitive feature, we must \emph{control for} the sensitive feature when estimating the parameters that capture the importance of other nonsensitive variables. Crucially, the sensitive feature should not be left out of the model \emph{before} training. 
In contrast, we do not want the sensitive feature to impact our prediction/evaluation when all else is equal (to ensure individual or explicit fairness). 
This is why the sensitive feature does eventually need to be excluded \emph{after} training. 
\cref{thm:MainResult,thm:existence,thm:computation} connect the  less intuitive \cref{def:fairness} to this simple and established algorithmic procedure.

\textbf{Generalization to a set of sensitive features:} In many applications, there might be more than one sensitive feature (e.g., both gender and race might be present). It is straightforward to generalize our framework for such cases.
All of the definitions, theorems, algorithms, and interpretations remain intact if instead of $\vec{x}_0\in\mathbb{R}$ we assume $\vec{x}_0\in\mathbb{R}^m$ for some $m\in\mathbb{N}$, where $m$ is the number of sensitive features. 
Thus, our framework accommodates multiple sensitive features.
More specifically, to apply our method we first train the model on all features. In the prediction step, we keep all the sensitive attributes fixed for all data points (e.g., if the sensitive features are age and gender we assume all people are young and female). The value of $\tau^*$ is then chosen in the way to maximize the accuracy on the validation set.

\section{Discussion\label{sec:discussion}}
In this section, we further discuss several important features of our proposed fairness notion and the algorithmic solution. 
In particular, (i) we overview the relationship with the important concept of group fairness, and (ii) we further elaborate on the significance of the separability property. We also argue that separability is a central notion in understanding the individual fairness property.

\subsection{Relationship with Group Fairness} \label{sub:Group Fairness}
Unlike other suggested solutions to the problem of proxying, our approach does not incorporate some notion of group fairness to alleviate this issue. For example
\citet{kamiran2009classifying} suggested  \emph{massaging} the training set in order to exhibit group fairness, or \citet{zemel2013learning} directly incorporated group fairness into the loss function. 
Although in \cref{sec:experiments} we show that our model performs well on the group fairness measure, it has not been directly incorporated into the objectives of our model. 
The reason we avoid mixing group fairness with the problem of proxying (which is essentially a matter of individual fairness) is the potential for what we call \emph{double unfairness}, a concept which we discuss below.

Double unfairness can happen when the protected group performs better than the unprotected group in spite of the discrimination. 
For instance, consider a dataset on college admissions with two groups $\mathcal{A}$ (the protected group) and $\mathcal{B}$ (the unprotected group): (i) A person from group $\mathcal{A}$, on average, has a lower chance of admission to the college compared to a person from group $\mathcal{B}$ with the same SAT score and extracurricular activities; (ii) Nevertheless, group $\mathcal{A}$ does better than group $\mathcal{B}$ on the SAT by a wide enough margin that on average the admission rate for $\mathcal{A}$ is higher than that for $\mathcal{B}$. 
The following synthesized dataset (see \cref{tab:toyDataset}) illustrates an example for this potential scenario.

\begin{table}[ht]
\caption{Toy example: for the sensitive attribute, $1$ represents the protected group $\mathcal{A}$ and $0$  represents group $\mathcal{B}$.}
\label{tab:toyDataset}
  	\vspace{-8pt}
\centering
{ \small
\begin{tabular}{lrrrr}
\toprule
ID & Admission & Sensitive & SAT & Extracurricular \\ 
\midrule
1 &   1 &   1 & 1600 &   4 \\ 
  2 &   1 &   1 & 1500 &   6 \\ 
  3 &   1 &   1 & 1500 &   4 \\ 
  4 &   0 &   1 & 1400 &   6 \\ 
  5 &   1 &   0 & 1400 &   6 \\ 
  6 &   1 &   0 & 1300 &   5 \\ 
  7 &   0 &   0 & 1200 &   4 \\ 
  8 &   0 &   0 & 1200 &   4 \\ 
  \bottomrule
\end{tabular}
}
\end{table}

It can be seen, from \cref{tab:toyDataset}, that the admission process has been unfair to applicants from the protected group $\mathcal{A}$. 
Candidates 4 and 5 are identical with the sole exception that candidate 4 is from group $\mathcal{A}$ and 5 is from group $\mathcal{B}$. 
Candidate 4 has been denied but 5 has been admitted.\footnote{A more precise way to detect unfairness against the protected group would be to run a model (such as linear regression) on the data and observe that the coefficient on the sensitive feature is negative. 
This means that, on aggregate, candidates from the protected group are treated worse than similar candidates from the other group.} 
On the other hand, group $\mathcal{A}$ performs better than group $\mathcal{B}$ since they have an acceptance rate of $\nicefrac{3}{4}$ while that of group $\mathcal{B}$ is only $\nicefrac{1}{2}$. 
Thus, group $\mathcal{A}$ does on average 50\% better. 
Intuitively, if we are to alleviate the discrimination against the protected group, we should expect a classifier that gives even a higher edge than 50\% to them.

One can verify the danger of proxying in this toy example when the sensitive feature is omitted, by giving a higher (positive) weight to extracurricular activities compared to SAT score. This would happen because SAT has a higher correlation with the sensitive feature. 
Our approach does not allow for such weight adjustments since, by \cref{thm:MainResult}, it controls for the sensitive feature when training the rest of the weights. 
In doing so, train-then-mask gives a higher edge than the original 50\% to group $\mathcal{A}$ in terms of admission ratio. 
This provides an advantage over approaches that tackle the problem of proxying by forcing a notion of group fairness. 
For instance, the methodology by \citet{kamiran2009classifying} would first massage the data by relabeling applicant 3 to denied (or applicant 7 to admitted) and then train the classifier. 
The algorithm would do this to equate the admission rates between the two groups.
Thus, this algorithm tries to get the acceptance rates of group $\mathcal{A}$ closer to that of $\mathcal{B}$. 
This, clearly, will only further discriminate candidates from the protected group.
Similar concerns exist about other approaches that somehow employ a notion of group fairness to address proxying.

\subsection{Separability and Individual Fairness}
At the heart of the sufficient conditions for \cref{thm:MainResult} is the separability of function $f$ between the sensitive features and all other features. 
What separability roughly says is that using a separable classifier, one can rank two individuals of the same group without knowing what their (common)  sensitive feature is. In this section, we argue that our notion of fairness introduces separability as a central concept in the understanding of individual fairness; and sheds light on important future research directions. 

Note that the separability of $h^*$ is a sufficient (and not a necessary) condition for the existence of an optimal EL-fair classifier.  
In \cref{thm:necessity of separability}, we show that under a slightly stronger notion of fairness,  if there is a fair classifier then the separability property is also necessary. 

\begin{definition}[Strictly EL-fair]\label{def:strong fairness}
Classifier $h$ satisfies \textit{strong} fairness criteria if it satisfies \cref{def:explicit disc,def:latent disc} but instead of \cref{eq: second latent fairness} in \cref{def:latent disc}, it satisfies$:$
$h^*(\vec{x}^1)>h^*(\vec{x}^2) \Rightarrow h(\vec{x}^1)> h(\vec{x}^2).$
\end{definition}

\newcommand{\necessity}{Suppose there is a classifier that satisfies the strictly EL-fairness notion. Then, the function $h^*$ is separable in the sense of definition \ref{def: separability}, and the corresponding function $g: \mathbb{R}^2\rightarrow\mathbb{R}$ of the separable representation is strictly monotone in its second argument.}

\begin{theorem}\label{thm:necessity of separability}
\necessity
\end{theorem}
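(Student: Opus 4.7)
The plan is to read off a separable structure for $h^*$ directly from the hypothesized strictly EL-fair classifier $h$. In fact, the nonsensitive-feature aggregator $K$ of the separable representation will essentially be $h$ itself, and $g$ will encode how $h^*$ relabels the common ranking $K$ induces within each sensitive group. Throughout I write $\tilde{\vec{x}} = (\vec{x}_1,\ldots,\vec{x}_\ell)$ for the nonsensitive part.

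First I would use explicit non-discrimination (\cref{def:explicit disc}) for $h$. Since $h$ depends only on $\tilde{\vec{x}}$, there is a function $K:\mathbb{R}^\ell \to \mathbb{R}$ with $h(\vec{x}) = K(\tilde{\vec{x}})$. Next I would pin down the relationship between $h^*$ and $K$ within each sensitive group. Fix $s\in\{0,1\}$ and two points $\vec{x}^1,\vec{x}^2$ with $\vec{x}^1_0 = \vec{x}^2_0 = s$. The first clause of \cref{def:latent disc} (which is retained in \cref{def:strong fairness}) gives $h^*(\vec{x}^1) = h^*(\vec{x}^2) \Rightarrow K(\tilde{\vec{x}}^1) = K(\tilde{\vec{x}}^2)$, and the strict clause of \cref{def:strong fairness} gives $h^*(\vec{x}^1) > h^*(\vec{x}^2) \Rightarrow K(\tilde{\vec{x}}^1) > K(\tilde{\vec{x}}^2)$. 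Their contrapositives combine to force $K(\tilde{\vec{x}}^1) < K(\tilde{\vec{x}}^2) \Rightarrow h^*(\vec{x}^1) < h^*(\vec{x}^2)$ and $K(\tilde{\vec{x}}^1) = K(\tilde{\vec{x}}^2) \Rightarrow h^*(\vec{x}^1) = h^*(\vec{x}^2)$. So within the group $\vec{x}_0 = s$, $h^*$ is a well-defined strictly increasing function of $K$; that is, there is a strictly increasing $\phi_s : \mathrm{range}(K) \to \mathbb{R}$ with $h^*(s,\tilde{\vec{x}}) = \phi_s(K(\tilde{\vec{x}}))$.

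Finally I would assemble $g$. After extending $\phi_0$ and $\phi_1$ to continuous strictly increasing functions on all of $\mathbb{R}$, set $g(t,z) = (1-\sigma(t))\,\phi_0(z) + \sigma(t)\,\phi_1(z)$, where $\sigma:\mathbb{R}\to[0,1]$ is continuous with $\sigma(0)=0$ and $\sigma(1)=1$. Then $g$ is continuous on $\mathbb{R}^2$, and for every fixed $t$ it is a convex combination (with nonnegative weights summing to one) of two strictly increasing functions, hence is strictly increasing in its second argument. Moreover, since $\vec{x}_0\in\{0,1\}$ for all $\vec{x}\in\mathcal{X}$, we recover $h^*(\vec{x}) = g(\vec{x}_0, K(\tilde{\vec{x}}))$ exactly.

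The conceptual heart of the argument, that $h$ forces $h^*$ to factor through a single $K$ inside each group (and hence globally, via distinct $\phi_0,\phi_1$), is clean. The only non-cosmetic subtlety is producing continuous strictly monotone extensions of $\phi_0,\phi_1$ from $\mathrm{range}(K)$ to all of $\mathbb{R}$; this step tacitly invokes continuity of $h^*$ (so that each $\phi_s$ is continuous on $\mathrm{range}(K)$ to begin with) and exploits the fact that $\vec{x}_0$ takes only two values, which leaves full freedom to interpolate the map $t\mapsto g(t,\cdot)$ between $t=0$ and $t=1$.
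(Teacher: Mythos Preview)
Your proof follows essentially the same route as the paper's: take $K$ to be the strictly EL-fair classifier $h$ itself (via explicit non-discrimination), invert the two fairness implications to see that within each sensitive group $h^*$ is a well-defined strictly increasing function of $K$, and then bundle the per-group maps $\phi_0,\phi_1$ into a single $g$. Your construction is in fact slightly more careful than the paper's, since you attend to the continuity of $g$ that \cref{def: separability} formally requires but the paper's proof leaves unaddressed.
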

\begin{proof}
	Denote the strongly fair function by $\bar{h}$. 
	Given that it does not exhibit explicit discrimination, $\bar{h}(\vec{x})$ does not depend on $\vec{x}_0$. Therefore, there is a function $K: \mathbb{R}^{\ell}\rightarrow\mathbb{R}$ such that:
	\[\forall \vec{x} \in \mathbb{R}^{\ell + 1}:\quad \bar{f}(\vec{x})=\lambda(\vec{x}_1,...,\vec{x}_\ell)\]
	Now note that the strong fairness conditions are invertible. That is, for each pair $\vec{x}^1,\vec{x}^2\in\mathbb{R}^{\ell + 1}$ that have the same sensitive group status (i.e., $\vec{x}^1_0$ and $\vec{x}^2_0$ are both equal to some $\tilde{\vec{x}}_0$), we have:
	\begin{equation*}
	\bar{h}(\vec{x}^1)=\bar{h}(\vec{x}^2) \Rightarrow h^*(\vec{x}^1)=h^*(\vec{x}^2),
	\end{equation*}
	and
	\begin{equation*}
	\bar{h}(\vec{x}^1)>\bar{h}(\vec{x}^2) \Rightarrow h^*(\vec{x}^1)>h^*(\vec{x}^2).
	\end{equation*}
	Which implies:
	\begin{equation*}
	K(\vec{x}^1_1,...,\vec{x}^1_\ell)= K(\vec{x}^2_1,...,\vec{x}^2_\ell) \Rightarrow h^*(\vec{x}^1)=h^*(\vec{x}^2),
	\end{equation*}
	and
	\begin{equation*}
	K(\vec{x}^1_1,...,\vec{x}^1_\ell)= K(\vec{x}^2_1,...,\vec{x}^2_\ell) \Rightarrow h^*(\vec{x}^1)>h^*(\vec{x}^2).
	\end{equation*}
	This implies that for any fixed $\tilde{\vec{x}}_0$, there is a monotone function $g_{\tilde{\vec{x}}_0}:\mathbb{R}\rightarrow\mathbb{R}$ such that
	\[\forall \vec{x} \in\mathbb{R}^{\ell + 1} \ s.t. \ \vec{x}_0=\tilde{\vec{x}}_0:\quad h^*(\vec{x})=g_{\tilde{\vec{x}}_0}\left(  K(\vec{x}_1,...,\vec{x}_\ell) \right).\]
	But this can simply be rewritten as:
	\[\exists g:\mathbb{R}^2\rightarrow\mathbb{R} \ s.t. \ \forall x\in\mathbb{R}^{\ell + 1}:\, h^*(\vec{x})=g\left( \vec{x}_0,K(\vec{x}_1,...,\vec{x}_\ell) \right).\]
\end{proof}

To see the intuition, consider an $h^*$ that does not satisfy the separability and monotonicity properties: suppose the impact of a nonsensitive feature $\vec{x}_i$ on the outcome of $h^*$depends on $\vec{x}_0$, e.g., for the protected group (i.e., when $\vec{x}_0 = 1$) larger values of $\vec{x}_i$ results in a higher chance of positive classification (and vice versa).  
This means that the ordering implied   under $\vec{x}_0=0$ is different from the ordering under $\vec{x}_1=1$. As a result, there is no classifier that satisfies the required corresponding orderings  withing both groups. 

The concept of separability provides a lens through which we can systematically think about some of the recent papers on fairness. For instance, \citet{DIKL2018} propose a de-coupling technique which, although focused mainly on group fairness, is motivated precisely by the fact that the weight of a factor on the outcome might have different signs for different groups.
It is important to note that we do not claim one should only use separable models (even if not appropriate in the context) to ensure EL-fairness. 
Indeed, we argue that under the non-separability assumption: (i) our method for detecting latent discrimination does not work, and (ii) by using currently existing methods several other problems arise (explained in other works such as \cite{DIKL2018}). 

We close this discussion by mentioning a few open questions. The first is to consider a novel methodology for measuring the \emph{degree of non-separability} for general classifiers. Another important question is to detect latent discrimination in non-separable environments and to design algorithms to ensure EL-fairness in these cases.
Finally, we need a measure to identify the extent of proxying and a strategy that efficiently trades off accuracy with fairness. 
\section{Experiments}\label{sec:experiments}
In this section, we compare the performance of the train-then-mask algorithm to a number of baselines on real-world scenarios. 
In our experiments, we compare train-then-mask  
(i) to the unconstrained optimum classifier (i.e., the one that tries to maximize the accuracy without any fairness constraints), 
(ii) to a model in which only the sensitive feature has been removed from training procedure (note that this algorithm might suffer from the latent discrimination), 
(iii) to the trivial majority classifier which always predict the most frequent label, 
(vi) to a data massaging algorithm introduced by \citet{kamiran2009classifying},
and (v) to the algorithm for maximizing a utility function subject to the fairness constraint introduced by \citet{zemel2013learning}. 
In our experiments we consider linear SVMs (separable) \citep{scholkopf2002learning} and neural networks (non-separable) for the family of classifiers  $\mathcal{H}$. 
To find the value of $\tau^*$ for our optimal fair classifier, we use a validation set;
we take the value of $\tau^*$ such that it maximizes the accuracy over validation set and then we report the result of classification over the test set.

\noindent \textbf{Datasets:}
We use the \emph{Adult Income} and \emph{German Credit} datasets from UCI Repository \citep{asuncion2007uci, blake1998uci}, and COMPAS Recidivism Risk dataset \citep{compas2018data}. 
Adult Income dataset contains information about 13 different features of 48,842 individuals and the labels identifying whether the income of those individuals is over 50K a year. 
The German Credit dataset consists of 1,000 people described by a set of 20 attributes labeled as good or bad credit risks.
The COMPAS dataset contains personal information (e.g., race, gender, age, and criminal history) of 3,537 African-American and 2,378 Caucasian individuals.
The goal of the classification tasks in these datasets is to predict, respectively, the income status, credit risks and  whether a convicted individual commit a crime again in the following two years.

\noindent \textbf{Measures:}
We use the following measures to evaluate the performance of algorithms.
\emph{Accuracy} measures the quality of prediction of a classifier over the test set. It is defined by 
$\Acc  = 1 - \frac{\sum_{i=1}^{n} |y^i - \hat{y}^i|}{n},$
where $n$ is the number of samples in the test set, $y^i$ and $\hat{y}^i$ are the real and predicted labels of a test sample $\vec{x}^i$.
\emph{Admittance} measures the ratio of samples assigned to the positive class in each group. It is defined by
$\textrm{Admit}_1 = \frac{\sum_{i: \vec{x}^i_0 = 1} \hat{y}^i}{\sum_{i: \vec{x}^i_0 = 1} 1}$  and $\textrm{Admit}_0 = \frac{\sum_{i: \vec{x}^i_0 = 0} \hat{y}^i}{\sum_{i: \vec{x}^i_0 = 0} 1}.$
\emph{Group discrimination} measures the difference between the proportion of positive classifications within each one of the protected and unprotected groups, i.e.,
$\Dis = |\textrm{Admit}_1 - \textrm{Admit}_0|.$
\emph{Latent discrimination} is defined as the ratio of pairs that violates \cref{def:latent disc} to the total number of pairs in each group. More precisely, we have
\[ \LDis = \frac{\left|\optimal(\vec{x}^i) >  \optimal(\vec{x}^j), h(\vec{x}^i) < h(\vec{x}^j) |i\neq j , \vec{x}_0^i = \vec{x}_0^j\right|}{\binom{\sum_{i: \vec{x}^i_0 = 0} 1}{2} + \binom{\sum_{i: \vec{x}^i_0 = 1} 1}{2}}. \]
\emph{Consistency} measures a (rough) notion of individual fairness by assuming the prediction for data samples that are close to each other should be (almost) similar.
More precisely, it provides a quantitative way to compare the classification prediction of a model for a given sample $\vec{x}^{i}$ to the set of its $k$-nearest neighbors (denoted by $kNN(\vec{x}^{i})$), i.e.,
$\Knn(\vec{x}^i)  = \frac{1}{k} \sum_{j \in kNN(\vec{x}^{i})} \hat{y}^j.$

We should mention that the admittance ratios in all these datasets are always lower for the protected groups. 
For example, in the Adult Income dataset, while the income status of 31\% of the male population is positive, this value is 11\% for females. In addition, in all these datasets there are several attributes that are highly correlated with the sensitive feature. 
For example, in German Credit dataset, the correlation of the sensitive feature, i.e., ``age'', with ``Present employment since", ``Housing'' and 
``Telephone'' features are 0.24, 0.28 and 0.21, respectively.

We first consider the linear SVM classifiers which are separable.
As shown in Table~\ref{tab:results}, train-then-mask represents the best performance in terms of removing the latent discrimination (see $\LDis$).
Indeed, both discrimination measures are lower under train-then-mask than it is under the unconstrained model or the model in which the sensitive feature has been omitted. This demonstrates that train-then-mask indeed helps with the issue of proxying.
More precisely, we observe that \emph{omitting the sensitive feature} has lower accuracy but also lower discrimination compared to the unconstrained classifier.

We also observe that train-then-mask performs very well in reducing the group discrimination at the expense of a very little decrease in the accuracy.  To see this, let us compare train-then-mask to the data massaging technique \cite{kamiran2009classifying}. Under the Adult Income dataset, train-then-mask achieves higher accuracy than data-massaging but yields also higher $\Dis$. Under the German Credit dataset, it does better on both the accuracy and group discrimination fronts. These results, combined with the intuitive interpretation of our algorithm, as well as its straightforward computation, suggests that train-then-mask as an algorithm  can be easily employed to alleviate (explicit and latent) discrimination in various datasets.
This observation further demonstrates that although \cref{def:fairness} does not seem directly related to discrimination between groups, it does capture a symptom of latent discrimination. 
 
 We should point out that in our applications (and a lot of practical ones) the sensitive feature does indeed increase the accuracy of the model.
 Note that, for example, in the Adult Income dataset the accuracy of admitting all individuals, i.e., the trivial baseline classifier, is $0.756$; thus going from $0.826$ to $0.824$ is not ``negligible''.
 Our main claim is not that we do not lose much accuracy. We argue that train-then-mask, compared to other approaches that aim at resolving proxying, does well. It sometimes offers both a higher accuracy and a lower discrimination than other approaches (i.e., it dominates them) and it is never dominated in our experiments by any other approach.
 
To investigate the effect of our algorithm on discrimination for non-separable classifiers  we consider a neural network with three hidden layers. 
In \cref{tab:results}, we observe that our algorithm performs well in reducing the discrimination while maintaining the accuracy for neural network classifiers.\footnote{Note that the algorithm of \citet{zemel2013learning} does not depend on the choice of the family of classifiers $\mathcal{H}$.}
It is important to point out that in neural networks because the classifiers are not separable and it is possible to have higher levels of proxying, the latent discrimination (i.e., $\LDis$) is also increased in comparison to the SVM classifier.
 Note that  even though our theory holds only for separable classifiers, we find that our notion of fairness is relevant in other practical scenarios where classifiers are not separable.

\begin{table}
	\caption{Comparison between the performance of different algorithms on Adult, German and COMPAS datasets. Data massage algorithm refers to the method presented by  \cite{kamiran2009classifying}. The results of algorithms with the best performance on $\Acc,\Dis$  and $\LDis$ are represented in blue. These results are the average of ten experiments.
		The unconstrained model $\optimal$, as we expect, in almost all cases is the most accurate classifier. 
		Train-then-mask shows the best performance in terms of removing the latent discrimination (i.e., $\LDis$), while its accuracy is also close to the unconstrained model and even better in one case. 
		In addition, while the main goal of Train-then-mask is to remove the effect of the latent discrimination, it also performs better than the other algorithms in terms of reducing the group discrimination.
		Note that for the algorithm of  \cite{zemel2013learning} and Majority the discrimination metrics do not provide any meaningful information, because their outputs for all instances are always constant. We left out the $\LDis$ for the optimal unconstrained classifier $\optimal$ since this metric measures the latent discrimination with respect to $\optimal$ itself.}
	\label{tab:results}
	\centering
	{ \footnotesize
		\begin{tabular}{lrrrrr}
			\toprule
			&  \multicolumn{5}{c}{Adult Income dataset}      \\
			\cmidrule(r){2-6}
			Algorithm     & $\Acc$ &  $\textrm{Adm}_1$ & $\textrm{Adm}_0$ & $\Dis$  &$\LDis$ \\
			\midrule
			Unconstrained $\optimal$ (SVM) &  \textbf{\textcolor{blue}{0.825}} & 0.078  & 0.248
			& 0.170 &  - \\
			Omit sensitive feature (SVM) &  	 0.824 & 0.080 & 0.243 & 0.163 & 0.016 \\
			\textbf{Train-then-mask (SVM)} & 0.823 & 0.096 & 0.188 &  0.092  & \textbf{\textcolor{blue}{0.000}} \\
			Data massage (SVM) &  0.807 & 0.183 & 0.236 & \textbf{\textcolor{blue}{0.053}} &  0.109 \\
			\citet{zemel2013learning} & 0.756 &  0.000 & 0.000 &  0.000  & 0.000  \\
			Majority & 0.756 &  0.000 & 0.000 &  0.000  & 0.000  \\
			\midrule
			Unconstrained $\optimal$  (NN) & \textbf{\textcolor{blue}{0.825}} & 0.093 & 0.266 & 0.173 & -  \\
			Omit sensitive feature  (NN) & 0.824  & 0.092 & 0.259 & 0.167 &0.083 \\
			\textbf{Train-then-mask (NN)} & 0.823 & 0.091  & 0.192 & 0.101 & \textbf{\textcolor{blue}{0.058}} \\
			Data massage (NN) & 0.808 & 0.183 & 0.247 & \textbf{\textcolor{blue}{0.064}} & 0.146   \\
			\bottomrule
		\end{tabular}
		\begin{tabular}{lrrrrr}
			\toprule
			&   \multicolumn{5}{c}{German Credit dataset}     \\
			\cmidrule(r){2-6} 
			Algorithm     & $\Acc$ &  $\textrm{Adm}_1$ & $\textrm{Adm}_0$ & $\Dis$  &$\LDis$ \\
			\midrule
			Unconstrained $\optimal$  (SVM)  &
			\textbf{\textcolor{blue}{0.75}}& 0.60&  0.86 & 0.26 &  - \\
			Omit sensitive feature  (SVM) 
				& 0.73 & 0.64 & 0.87 &  0.23 & 0.016  \\
			\textbf{Train-then-mask (SVM)}
			& 0.74 & 0.61 &  0.80 & \textbf{\textcolor{blue}{0.19}} & \textbf{\textcolor{blue}{0.000}} \\
			Data massage (SVM)
		& 0.73  & 0.63 & 0.83 &  0.20 & 0.114 \\
			\citet{zemel2013learning}  &
			0.67 & 1.00 & 1.00 & 0.00 & 0.000  \\
			Majority &
			0.67 & 1.00 & 1.00 & 0.00 & 0.000 \\
			\midrule
			Unconstrained $\optimal$  (NN) &
				0.72 & 0.62 & 0.85 & 0.23 & - \\
			Omit sensitive feature  (NN) 
			& 0.69 & 0.67 & 0.84 & 0.17 & 0.325 \\
			\textbf{Train-then-mask (NN)} 
		& \textbf{\textcolor{blue}{0.73}} & 0.60 & 0.76 & \textbf{\textcolor{blue}{0.16}} &
		\textbf{\textcolor{blue}{0.264}} \\
			Data massage (NN) &
		0.70 & 0.62 & 0.81 & 0.19 & 0.391  \\
			\bottomrule
		\end{tabular}
		\begin{tabular}{lrrrrr}
			\toprule
			&   \multicolumn{5}{c}{COMPAS Recidivism dataset}        \\
			\cmidrule(r){2-6}
			Algorithm     & $\Acc$ &  $\textrm{Adm}_1$ & $\textrm{Adm}_0$ & $\Dis$  &$\LDis$ \\
			\midrule
			Unconstrained $\optimal$  (SVM) 
			& \textbf{\textcolor{blue}{0.768}} & 0.27 & 0.62 & 0.35  &  - \\
			Omit sensitive feature  (SVM) 
			&    0.765 & 0.34 & 0.56 & 0.22 &  0.005  \\
			\textbf{Train-then-mask (SVM)}
			& 0.766 & 0.44 & 0.64  & \textbf{\textcolor{blue}{0.20}} & \textbf{\textcolor{blue}{0.000}} \\
			Data massage (SVM)
				& 0.747 & 0.35 & 0.58 & 0.23 & 0.025 \\
			\citet{zemel2013learning}  &
			0.509 & 	1.00 & 1.00 & 0.00 & 0.000  \\
			Majority & 	 0.509 & 1.00 & 1.00 & 0.00 & 0.000  \\
			\midrule
			Unconstrained $\optimal$ (NN)
		& \textbf{\textcolor{blue}{0.767}} &0.25 &0.60 & 0.35 & - \\
			Omit sensitive feature  (NN) 
				& 0.741  & 0.42 &  0.67 & 0.25 &  0.095 \\
			\textbf{Train-then-mask (NN)} 
			& 0.740 & 0.31 & 0.53 & \textbf{\textcolor{blue}{0.22}} & \textbf{\textcolor{blue}{0.076}}\\
			Data massage (NN) &
		 0.738 & 0.39 & 0.63 & 0.24 & 0.098 \\
			\bottomrule
		\end{tabular}
	}
\end{table}

 \noindent \textbf{Effect of Threshold $\bm{\tau}$:}
 In \cref{thm:existence}, we showed that under certain conditions for all value of $\tau$, the function $h^*(0,\vec{x}_1,\cdots,\vec{x}_\ell) + \tau$ is fair based on \cref{def:fairness}. 
 \cref{thm:computation} argues that the value of $\tau^*$  that maximizes the accuracy has the same classification outcome as the optimal fair classifier. 
 However, if in an application one is more interested in lowering discrimination than in accuracy, she may choose values of $\tau$ other than $\tau^*$ in order to fine-tune the accuracy-discrimination trade-off according to the specifics of the application. 
 \cref{fig:tau} illustrates this point on the Adult Income, German Credit, and COMPAS Recidivism Risk datasets. 
 Each sub-figure consists of points in the accuracy-discrimination space where each point comes from a specific choice of $\tau$. 
 The orange part of each curve (circles) is the \emph{Pareto frontier}. 
 That is, one cannot choose a $\tau$ that does better on both accuracy and discrimination fronts than an orange colored point. Whereas any blue points (triangles) correspond to choices of $\tau$ that are dominated by one other choice of $\tau$ on both fronts.
 The black point (square) corresponds to the value of $\tau^*$ with the maximum accuracy on the validation set.
 Note that accuracy and discrimination values reported in \cref{tab:results} are for this choice of $\tau^*$.

\noindent \textbf{Consistency and Individual Fairness:}
In this part, we provide experimental evidence to support our claim regarding the individual fairness of our proposed classifier.
For this reason, we compare the consistency in the output of our fair classifiers, i.e., for each sample $\vec{x}^i$, we compare the value of $\hat{y}^i$ to $\Knn(\vec{x}^i)$. 
In \cref{fig:knn}, we observe that, not only for two data samples $\vec{x}^1$ and $\vec{x}^2$ such that $(\vec{x}^1_1,...,\vec{x}^1_\ell)=(\vec{x}^2_1,...,\vec{x}^2_\ell)$ the prediction is exactly the same (explicit individual fairness), for data samples which are close to each other (based on their Euclidean distances in the feature space) the predictions remain close.

\begin{figure*}[htb!]
	\centering
	\subfloat[Adult Income dataset]{\includegraphics[height=1.3in]{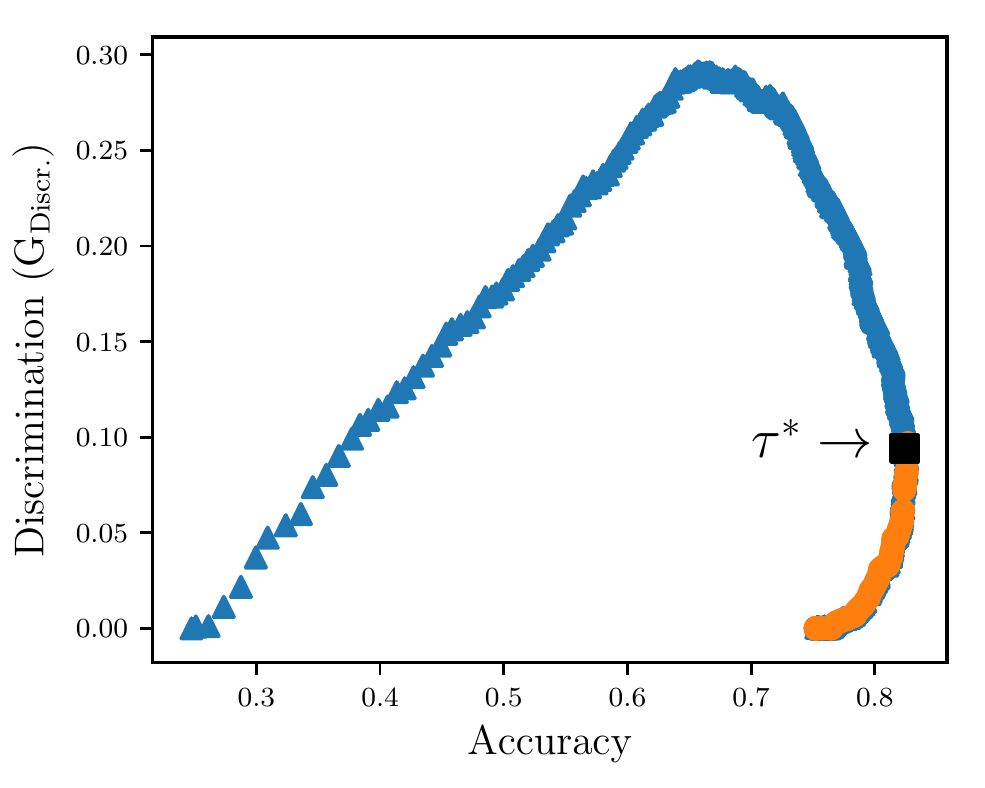} \label{fig:adult}}
	\quad
	\subfloat[German Credit dataset]{\includegraphics[height=1.3in]{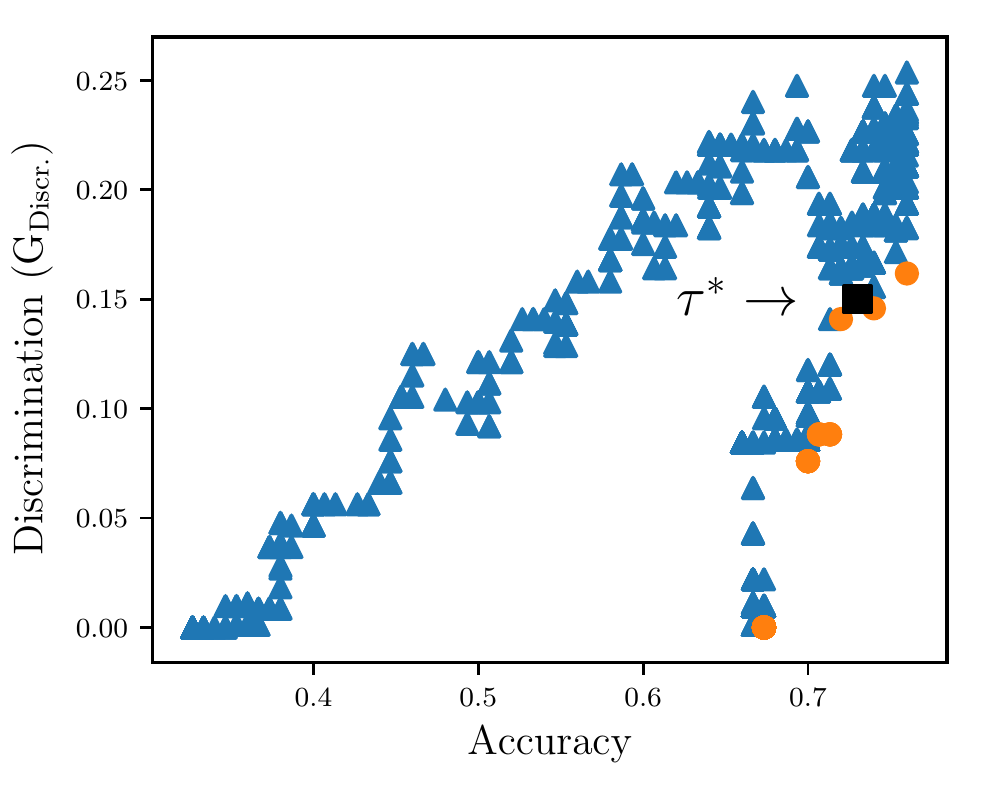}\label{fig:german}}
	\quad
	\subfloat[COMPAS Recidivism dataset]{\includegraphics[height=1.3in]{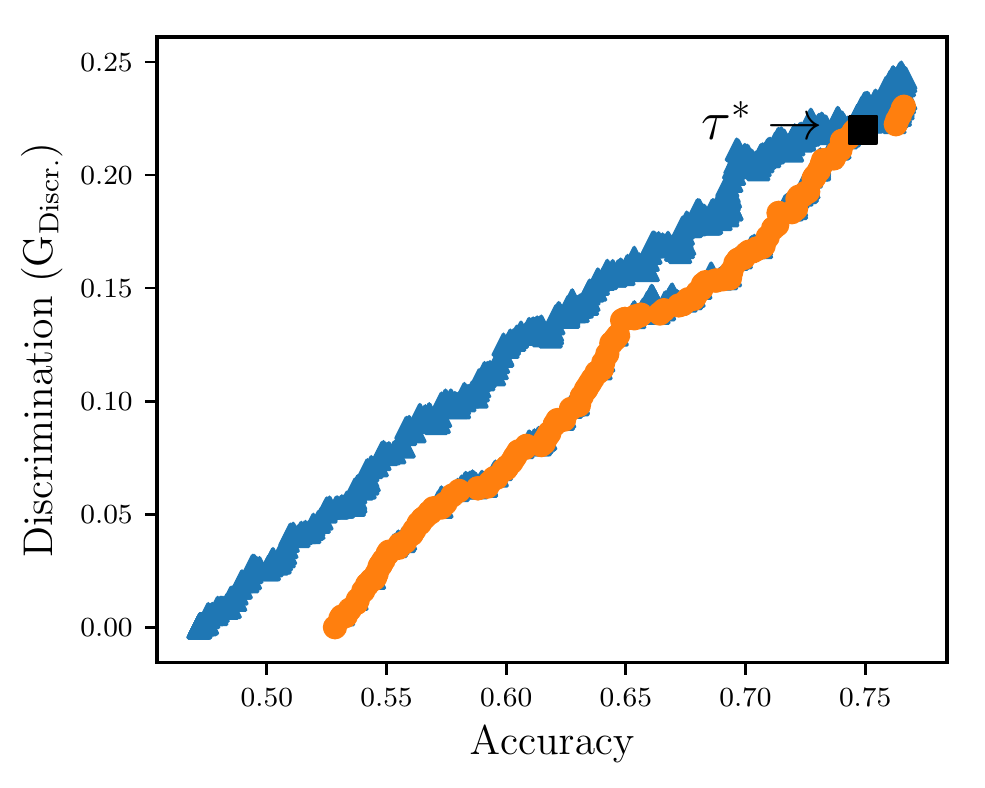}\label{fig:compasn}}
	\caption{The trade-off between accuracy and fairness for different values of $\tau$. The orange points represent the Pareto frontier samples of $\tau$, which means that one cannot do better than them in both fronts of accuracy and discrimination. The black point (square) corresponds to the value of $\tau^*$ with the maximum accuracy on the validation set.}\label{fig:tau}
\end{figure*}
\begin{figure*}[htb!]
	\centering
	\subfloat[Adult Income dataset ($k = 10$)]{\includegraphics[height=1.3in]{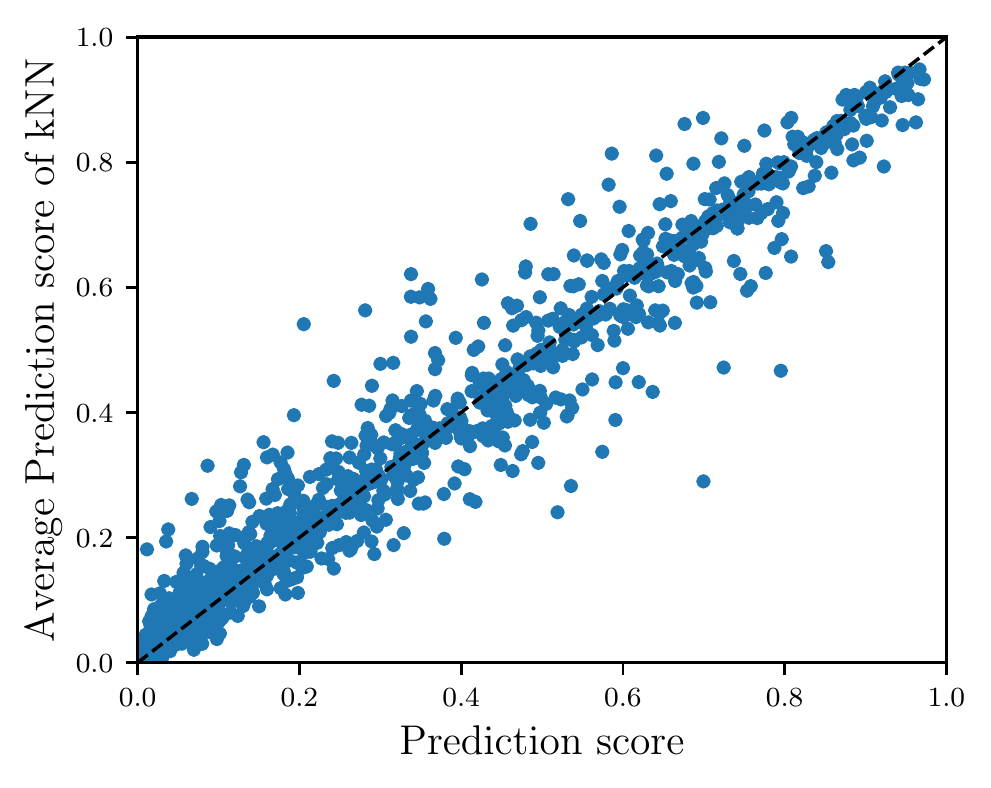} \label{fig:adult_knn}}
	\quad
	\subfloat[German dataset ($k = 5$)]{\includegraphics[height=1.3in]{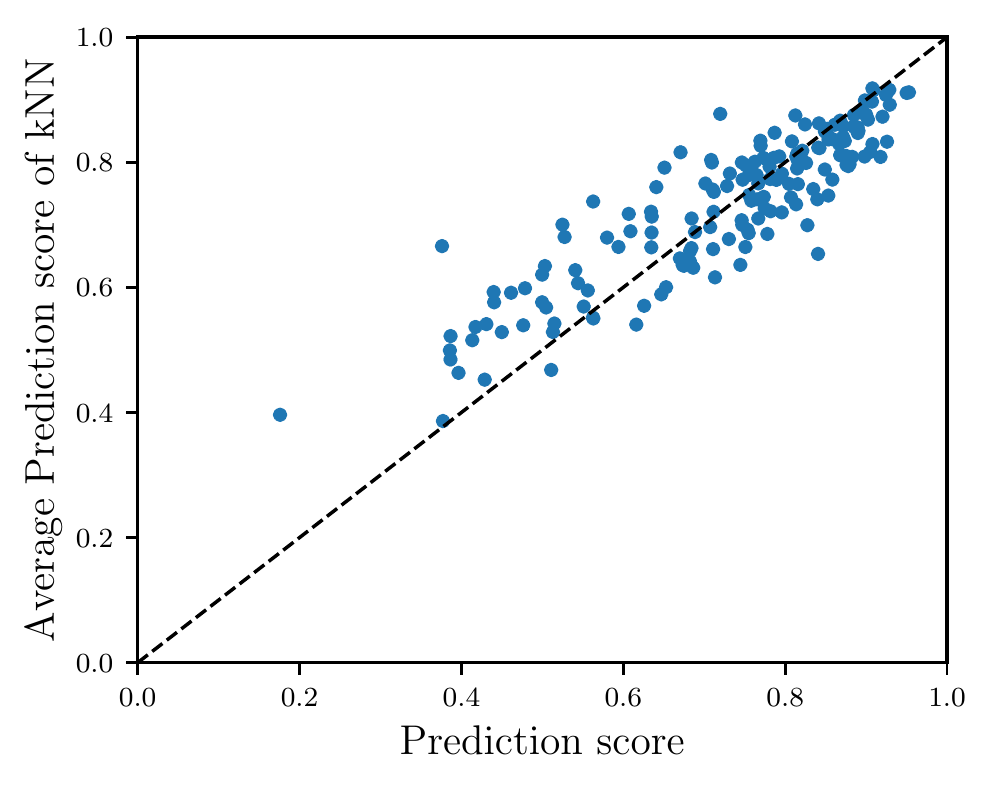}\label{fig:german_knn}}
	\quad
	\subfloat[COMPAS dataset ($k = 5$)]{\includegraphics[height=1.3in]{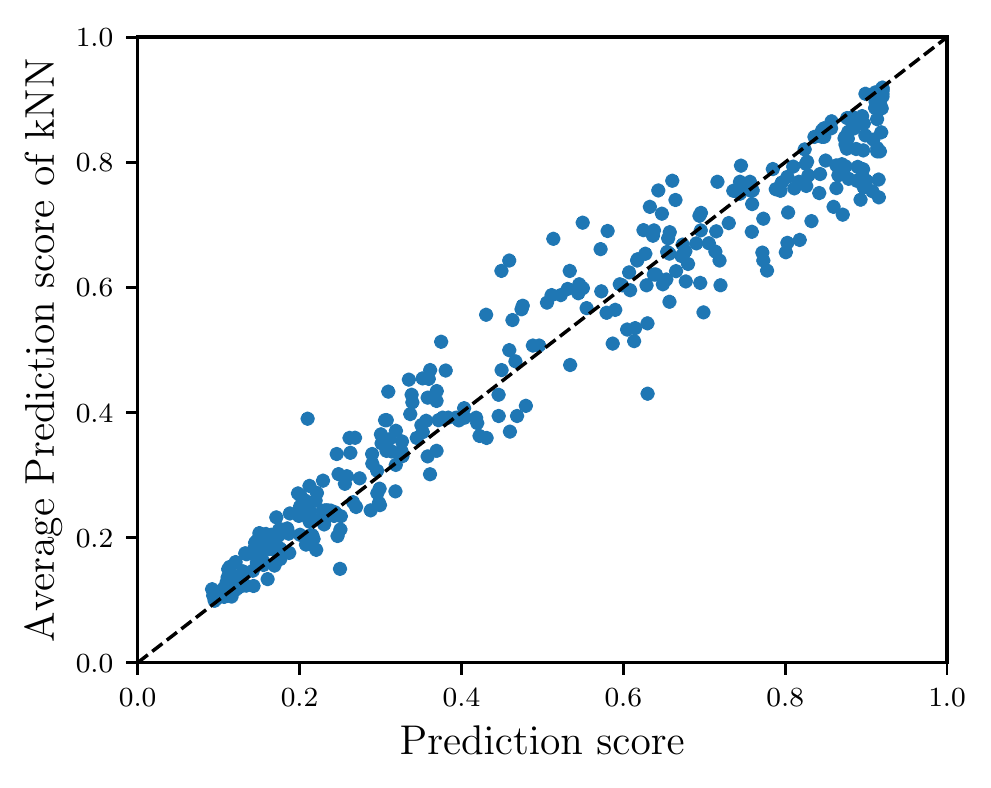}\label{fig:compas_knn}}
	\caption{The comparison between the prediction score of each data point and the average score of its $k$ nearest neighbors in the test set. We set $\tau = 0$ in this experiment. Note that any other value of $\tau$ shifts both axes equally.
	}\label{fig:knn}
\end{figure*}

\noindent \textbf{Multiple Sensitive Features:}
In \cref{sec:characterization}, we discussed that our framework generalizes to scenarios where there are multiple sensitive features.
In this part, we use an SVM classifier to evaluate the performance of our algorithm over multiple sensitive features.
In the first experiment, we consider ``sex'' and ``race'' as two sensitive features in Adult Income dataset.
In the prediction step, we assume all people are ``female" and ``black".
The  accuracy is 0.824  with  $\textrm{Adm}_1 = 0.091$, $\textrm{Adm}_0 = 0.174$  and $\Dis = 0.083$ for ``sex".
For the second experiment, in the German Credit dataset, we consider ``Personal status and sex'' along with the ``age'' as sensitive attributes. Also, in the training step, we assume all individuals are ``young'' and  "female and single''. The  accuracy is 0.71  with  $\textrm{Adm}_1 = 0.78$, $\textrm{Adm}_0 = 0.86$  and $\Dis = 0.08$ for ``age". For these experiments, when we na\"ively omit the sensitive features $\Dis$ are, respectively,  0.170 and 0.196 for Adult Income and German Credit datasets.
\section{Conclusion}\label{sec:conclusion}
It is well established in the literature that simply omitting the sensitive feature from the model will not necessarily give a fair classifier. 
This is because often, nonsensitive features are correlated with the sensitive one and can act as proxies of that feature, bringing about latent discrimination. 
In spite of the consensus on the importance of latent discrimination and the attempts to eliminate it, no formal definition of it has been provided based on the notion of within-group fairness.
Our main observation for providing an operational definition of latent discrimination relied on diagnosing this phenomenon by examining its symptoms. 
We argue that changing the order of the values assigned to two samples within the same group compared to the optimal unconstrained classifier is a symptom of proxying, and we call a classifier free of latent discrimination if it does not exhibit any such disorders.

We demonstrated that our notion of fairness has multiple favorable features, making it suitable for analysis of individual fairness. 
First, we proved that the optimal fair classifier can be represented in a simple fashion. 
It enjoys an intuitive interpretation that the sensitive feature should be  omitted after, rather than before training. 
This way, we control for the sensitive feature when estimating the weights on other features; but at the same time, we do not use the sensitive feature in the decision-making process. Based on this intuition, we then provided a simple two-step algorithm, called \emph{train-then-mask}, for computing the optimal fair classifier. 
We showed that aside from simplicity and ease of computation, our notion of fairness had the advantage that it does not lead to \emph{double discrimination}. 
That is when the group that is discriminated against is also the group that performs better overall, our method still removes the bias against that protected group. 
Finally, we should point out while $\optimal$ can be computed in many practical scenarios, but in the worst case, it is not possible to have $\optimal$. Therefore, there is a gap between the surrogated accuracy and real accuracy which results in a gap between surrogated and real discriminations.

\paragraph{Acknowledgements.} The work of Amin Karbasi was supported by AFOSR Young Investigator Award (FA9550-18-1-0160).

\bibliography{EliminatingLatentDiscrimination}

\begin{thebibliography}{43}
\providecommand{\natexlab}[1]{#1}
\providecommand{\url}[1]{\texttt{#1}}
\expandafter\ifx\csname urlstyle\endcsname\relax
  \providecommand{\doi}[1]{doi: #1}\else
  \providecommand{\doi}{doi: \begingroup \urlstyle{rm}\Url}\fi

\bibitem[Asuncion and Newman(2007)]{asuncion2007uci}
Arthur Asuncion and David Newman.
\newblock Uci machine learning repository, 2007.

\bibitem[Barocas and Selbst(2016)]{barocas2016big}
Solon Barocas and Andrew~D Selbst.
\newblock Big data's disparate impact.
\newblock \emph{California Law Review}, 104:\penalty0 671, 2016.

\bibitem[Blake and Merz(1998)]{blake1998uci}
Catherine~L Blake and Christopher~J Merz.
\newblock Uci repository of machine learning databases.
\newblock \emph{Department of Information and Computer Science}, 55, 1998.

\bibitem[Burke et~al.(2018)Burke, Sonboli, and
  Ordonez{-}Gauger]{burke2018balanced}
Robin Burke, Nasim Sonboli, and Aldo Ordonez{-}Gauger.
\newblock {Balanced Neighborhoods for Multi-sided Fairness in Recommendation}.
\newblock In \emph{Conference on Fairness, Accountability and Transparency,
  {FAT} 2018}, pages 202--214, 2018.

\bibitem[Celis and Vishnoi(2017)]{celis2017fair}
L.~Elisa Celis and Nisheeth~K. Vishnoi.
\newblock {Fair Personalization}.
\newblock \emph{CoRR}, abs/1707.02260, 2017.
\newblock URL \url{http://arxiv.org/abs/1707.02260}.

\bibitem[Celis et~al.(2018{\natexlab{a}})Celis, Keswani, Straszak, Deshpande,
  Kathuria, and Vishnoi]{celis2018fair}
L.~Elisa Celis, Vijay Keswani, Damian Straszak, Amit Deshpande, Tarun Kathuria,
  and Nisheeth~K. Vishnoi.
\newblock {Fair and Diverse DPP-Based Data Summarization}.
\newblock In \emph{International Conference on Machine Learning}, pages
  715--724, 2018{\natexlab{a}}.

\bibitem[Celis et~al.(2018{\natexlab{b}})Celis, Straszak, and
  Vishnoi]{celis2018ranking}
L.~Elisa Celis, Damian Straszak, and Nisheeth~K. Vishnoi.
\newblock {Ranking with Fairness Constraints}.
\newblock In \emph{International Colloquium on Automata, Languages, and
  Programming, {ICALP}}, pages 28:1--28:15, 2018{\natexlab{b}}.

\bibitem[Clarke(2005)]{clarke2005phantom}
Kevin~A Clarke.
\newblock The phantom menace: Omitted variable bias in econometric research.
\newblock \emph{Conflict management and peace science}, 22\penalty0
  (4):\penalty0 341--352, 2005.

\bibitem[Donini et~al.(2018)Donini, Oneto, Ben{-}David, Shawe{-}Taylor, and
  Pontil]{donini2018empirical}
Michele Donini, Luca Oneto, Shai Ben{-}David, John Shawe{-}Taylor, and
  Massimiliano Pontil.
\newblock Empirical risk minimization under fairness constraints.
\newblock \emph{CoRR}, abs/1802.08626, 2018.

\bibitem[Dwork et~al.(2012)Dwork, Hardt, Pitassi, Reingold, and
  Zemel]{dwork2012fairness}
Cynthia Dwork, Moritz Hardt, Toniann Pitassi, Omer Reingold, and Richard Zemel.
\newblock Fairness through awareness.
\newblock In \emph{Proceedings of the 3rd innovations in theoretical computer
  science conference}, pages 214--226. ACM, 2012.

\bibitem[Dwork et~al.(2018)Dwork, Immorlica, Kalai, and Leiserson]{DIKL2018}
Cynthia Dwork, Nicole Immorlica, Adam~Tauman Kalai, and Mark D.~M. Leiserson.
\newblock {Decoupled Classifiers for Group-Fair and Efficient Machine
  Learning}.
\newblock In \emph{Conference on Fairness, Accountability and Transparency
  (FAT)}, pages 119--133, 2018.

\bibitem[Ekstrand et~al.(2018)Ekstrand, Joshaghani, and
  Mehrpouyan]{ekstrand2018privacy}
Michael~D. Ekstrand, Rezvan Joshaghani, and Hoda Mehrpouyan.
\newblock {Privacy for All: Ensuring Fair and Equitable Privacy Protections}.
\newblock In \emph{Conference on Fairness, Accountability and Transparency,
  {FAT}}, pages 35--47, 2018.

\bibitem[Ellis and Watson(2012)]{ellis2012eu}
Evelyn Ellis and Philippa Watson.
\newblock \emph{EU anti-discrimination law}.
\newblock Oxford University Press, 2012.

\bibitem[Ghili(2016)]{ghili2016network}
Soheil Ghili.
\newblock Network formation and bargaining in vertical markets: The case of
  narrow networks in health insurance.
\newblock Available at SSRN: https://ssrn.com/abstract=2857305, 2016.

\bibitem[Gillen et~al.(2018)Gillen, Jung, Kearns, and Roth]{gillen2018online}
Stephen Gillen, Christopher Jung, Michael~J. Kearns, and Aaron Roth.
\newblock {Online Learning with an Unknown Fairness Metric}.
\newblock \emph{CoRR}, abs/1802.06936, 2018.

\bibitem[Greene(2003)]{greene2003econometric}
William~H Greene.
\newblock \emph{Econometric analysis}.
\newblock Pearson Education India, 2003.

\bibitem[Grgic{-}Hlaca et~al.(2018)Grgic{-}Hlaca, Zafar, Gummadi, and
  Weller]{hlaca2018beyond}
Nina Grgic{-}Hlaca, Muhammad~Bilal Zafar, Krishna~P. Gummadi, and Adrian
  Weller.
\newblock {Beyond Distributive Fairness in Algorithmic Decision Making: Feature
  Selection for Procedurally Fair Learning}.
\newblock In \emph{Proceedings of the Thirty-Second Conference on Artificial
  Intelligence (AAAI)}, 2018.

\bibitem[Hardt et~al.(2016)Hardt, Price, Srebro, et~al.]{hardt2016equality}
Moritz Hardt, Eric Price, Nati Srebro, et~al.
\newblock Equality of opportunity in supervised learning.
\newblock In \emph{Advances in neural information processing systems}, pages
  3315--3323, 2016.

\bibitem[Hashimoto et~al.(2018)Hashimoto, Srivastava, Namkoong, and
  Liang]{hashimoto2018fairness}
Tatsunori~B. Hashimoto, Megha Srivastava, Hongseok Namkoong, and Percy Liang.
\newblock {Fairness Without Demographics in Repeated Loss Minimization}.
\newblock In \emph{International Conference on Machine Learning}, pages
  1934--1943, 2018.

\bibitem[Heidari et~al.(2018)Heidari, Ferrari, Gummadi, and
  Krause]{heidari2018fairness}
Hoda Heidari, Claudio Ferrari, Krishna~P. Gummadi, and Andreas Krause.
\newblock {Fairness Behind a Veil of Ignorance: A Welfare Analysis for
  Automated Decision Making}.
\newblock \emph{CoRR}, abs/1806.04959, 2018.

\bibitem[Hendel and Nevo(2013)]{hendel2013intertemporal}
Igal Hendel and Aviv Nevo.
\newblock Intertemporal price discrimination in storable goods markets.
\newblock \emph{American Economic Review}, 103\penalty0 (7):\penalty0 2722--51,
  2013.

\bibitem[Hu and Chen(2018)]{hu2018short}
Lily Hu and Yiling Chen.
\newblock {A Short-term Intervention for Long-term Fairness in the Labor
  Market}.
\newblock In \emph{Proceedings Conference on World Wide Web (WWW)}, pages
  1389--1398. {ACM}, 2018.

\bibitem[Jabbari et~al.(2017)Jabbari, Joseph, Kearns, Morgenstern, and
  Roth]{jabbari2017fairness}
Shahin Jabbari, Matthew Joseph, Michael Kearns, Jamie Morgenstern, and Aaron
  Roth.
\newblock {Fairness in Reinforcement Learning}.
\newblock In \emph{International Conference on Machine Learning}, pages
  1617--1626, 2017.

\bibitem[Kamiran and Calders(2009)]{kamiran2009classifying}
Faisal Kamiran and Toon Calders.
\newblock Classifying without discriminating.
\newblock In \emph{2nd International Conference on Computer, Control and
  Communication}, pages 1--6. IEEE, 2009.

\bibitem[Kamishima et~al.(2018)Kamishima, Akaho, Asoh, and
  Sakuma]{kamishima2018recommendation}
Toshihiro Kamishima, Shotaro Akaho, Hideki Asoh, and Jun Sakuma.
\newblock {Recommendation Independence}.
\newblock In \emph{Conference on Fairness, Accountability and Transparency,
  {FAT} 2018}, pages 187--201, 2018.

\bibitem[Kazemi et~al.(2018)Kazemi, Zadimoghaddam, and
  Karbasi]{kazemi2018deletion}
Ehsan Kazemi, Morteza Zadimoghaddam, and Amin Karbasi.
\newblock {Scalable Deletion-Robust Submodular Maximization: Data Summarization
  with Privacy and Fairness Constraints}.
\newblock In \emph{International Conference on Machine Learning}, pages
  2549--2558, 2018.

\bibitem[Kilbertus et~al.(2017)Kilbertus, Carulla, Parascandolo, Hardt,
  Janzing, and Sch{\"o}lkopf]{kilbertus2017avoiding}
Niki Kilbertus, Mateo~Rojas Carulla, Giambattista Parascandolo, Moritz Hardt,
  Dominik Janzing, and Bernhard Sch{\"o}lkopf.
\newblock Avoiding discrimination through causal reasoning.
\newblock In \emph{Advances in Neural Information Processing Systems}, pages
  656--666, 2017.

\bibitem[Kleinberg et~al.(2017)Kleinberg, Mullainathan, and Raghavan]{KMR2017}
Jon~M. Kleinberg, Sendhil Mullainathan, and Manish Raghavan.
\newblock {Inherent Trade-Offs in the Fair Determination of Risk Scores}.
\newblock In \emph{8th Innovations in Theoretical Computer Science Conference
  ({ITCS})}, pages 43:1--43:23, 2017.

\bibitem[Liu et~al.(2018)Liu, Dean, Rolf, Simchowitz, and
  Hardt]{liu2018delayed}
Lydia~T. Liu, Sarah Dean, Esther Rolf, Max Simchowitz, and Moritz Hardt.
\newblock {Delayed Impact of Fair Machine Learning}.
\newblock In \emph{International Conference on Machine Learning}, pages
  3156--3164, 2018.

\bibitem[Nabi and Shpitser(2018)]{nabi2018fair}
Razieh Nabi and Ilya Shpitser.
\newblock {Fair Inference on Outcomes}.
\newblock In \emph{Proceedings of the Thirty-Second Conference on Artificial
  Intelligence (AAAI)}, 2018.

\bibitem[Pedreshi et~al.(2008)Pedreshi, Ruggieri, and
  Turini]{pedreshi2008discrimination}
Dino Pedreshi, Salvatore Ruggieri, and Franco Turini.
\newblock Discrimination-aware data mining.
\newblock In \emph{Proceedings of the 14th ACM SIGKDD international conference
  on Knowledge discovery and data mining}, pages 560--568. ACM, 2008.

\bibitem[Pleiss et~al.(2017)Pleiss, Raghavan, Wu, Kleinberg, and
  Weinberger]{pleiss2017on}
Geoff Pleiss, Manish Raghavan, Felix Wu, Jon~M. Kleinberg, and Kilian~Q.
  Weinberger.
\newblock {On Fairness and Calibration}.
\newblock In \emph{{Advances in Neural Information Processing Systems}}, pages
  5684--5693, 2017.

\bibitem[{ProPublica}(2018)]{compas2018data}
{ProPublica}.
\newblock {COMPAS Recidivism Risk Score Data and Analysis}.
\newblock
  \url{https://www.propublica.org/datastore/dataset/compas-recidivism-risk-score-data-and-analysis},
  2018.

\bibitem[Samadi et~al.(2018)Samadi, Tantipongpipat, Morgenstern, Singh, and
  Vempala]{samadi2018price}
Samira Samadi, Uthaipon Tantipongpipat, Jamie Morgenstern, Mohit Singh, and
  Santosh Vempala.
\newblock {The Price of Fair PCA: One Extra Dimension}.
\newblock \emph{CoRR}, abs/1811.00103, 2018.

\bibitem[Scheffler et~al.(2007)Scheffler, Brown, and Rice]{scheffler2007role}
Richard~M Scheffler, Timothy~T Brown, and Jennifer~K Rice.
\newblock The role of social capital in reducing non-specific psychological
  distress: The importance of controlling for omitted variable bias.
\newblock \emph{Social Science \& Medicine}, 65\penalty0 (4):\penalty0
  842--854, 2007.

\bibitem[Sch{\"o}lkopf and Smola(2002)]{scholkopf2002learning}
Bernhard Sch{\"o}lkopf and Alexander~J Smola.
\newblock \emph{{Learning with Kernels: Support Vector Machines,
  Regularization, Optimization, and Beyond}}.
\newblock MIT press, 2002.

\bibitem[Seber and Lee(2012)]{seber2012linear}
George~AF Seber and Alan~J Lee.
\newblock \emph{Linear regression analysis}, volume 329.
\newblock John Wiley \& Sons, 2012.

\bibitem[Speicher et~al.(2018)Speicher, Ali, Venkatadri, Ribeiro, Arvanitakis,
  Benevenuto, Gummadi, Loiseau, and Mislove]{speicher2018potential}
Till Speicher, Muhammad Ali, Giridhari Venkatadri, Filipe~Nunes Ribeiro, George
  Arvanitakis, Fabr{\'{\i}}cio Benevenuto, Krishna~P. Gummadi, Patrick Loiseau,
  and Alan Mislove.
\newblock {Potential for Discrimination in Online Targeted Advertising}.
\newblock In \emph{Conference on Fairness, Accountability and Transparency,
  {FAT}}, pages 5--19, 2018.

\bibitem[Sudhir(2001)]{sudhir2001structural}
Karunakaran Sudhir.
\newblock Structural analysis of manufacturer pricing in the presence of a
  strategic retailer.
\newblock \emph{Marketing Science}, 20\penalty0 (3):\penalty0 244--264, 2001.

\bibitem[WhiteHouse(2016)]{whiteHouse2016big}
WhiteHouse.
\newblock \emph{{Big data: A report on algorithmic systems, opportunity, and
  civil rights}}.
\newblock Executive Office of the President, 2016.

\bibitem[Zemel et~al.(2013)Zemel, Wu, Swersky, Pitassi, and
  Dwork]{zemel2013learning}
Richard Zemel, Yu~Wu, Kevin Swersky, Toni Pitassi, and Cynthia Dwork.
\newblock {Learning Fair Representations}.
\newblock In \emph{International Conference on Machine Learning}, pages
  325--333, 2013.

\bibitem[Zhang and Bareinboim(2018)]{zhang2018fairness}
Junzhe Zhang and Elias Bareinboim.
\newblock {Fairness in Decision-Making - The Causal Explanation Formula}.
\newblock In \emph{Proceedings of the Thirty-Second Conference on Artificial
  Intelligence (AAAI)}, 2018.

\bibitem[{\v{Z}}liobait{\.e} and Custers(2016)]{vzliobaite2016using}
Indr{\.e} {\v{Z}}liobait{\.e} and Bart Custers.
\newblock Using sensitive personal data may be necessary for avoiding
  discrimination in data-driven decision models.
\newblock \emph{Artificial Intelligence and Law}, 24\penalty0 (2):\penalty0
  183--201, 2016.

\end{thebibliography}
\bibliographystyle{plainnat}
\end{document}